\documentclass{article}

\usepackage[preprint]{neurips_2026}

\usepackage[utf8]{inputenc}
\usepackage[T1]{fontenc}
\usepackage{hyperref}
\usepackage{url}
\usepackage{booktabs}
\usepackage{amsfonts}
\usepackage{nicefrac}
\usepackage{microtype}
\usepackage{xcolor}
\usepackage{amsmath}
\usepackage{amssymb}
\usepackage{amsthm}
\usepackage{graphicx}
\usepackage{multirow}
\usepackage{array}
\usepackage{algorithm}
\usepackage{algpseudocode}
\usepackage{enumitem}
\usepackage{subcaption}

\newtheorem{proposition}{Proposition}
\newtheorem{lemma}{Lemma}

\newcommand{\method}{Aperture}
\newcommand{\E}{\mathbb{E}}

\newcommand{\cfail}{c^{\mathrm{fail}}}
\newcommand{\thprobe}{\theta_{\mathrm{probe}}}
\newcommand{\poolMath}{17{,}398}
\newcommand{\poolLogic}{6{,}400}
\newcommand{\grp}{8}
\newcommand{\nprobe}{512}
\newcommand{\epProbe}{6}
\newcommand{\epMain}{8}
\newcommand{\mixLam}{0.75}
\newcommand{\mixAlpha}{0.5}
\newcommand{\nEval}{4}
\newcommand{\probeOverM}{2.2\%}
\newcommand{\evalOver}{12.5\%}
\newcommand{\totOverM}{14.7\%}
\newcommand{\totOverL}{18.5\%}
\newcommand{\mAccG}{0.377}
\newcommand{\mAccA}{0.395}
\newcommand{\mAccD}{0.385}
\newcommand{\mAccDA}{0.401}

\newcommand{\MAccG}{0.481}
\newcommand{\MAccA}{0.497}

\newcommand{\MAccDA}{0.502}
\newcommand{\lAccG}{0.171}
\newcommand{\lAccA}{0.185}

\newcommand{\LAccG}{0.211}
\newcommand{\LAccA}{0.223}

\newcommand{\mLiveG}{45.8\%}
\newcommand{\mLiveA}{60.7\%}
\newcommand{\mLiveDA}{62.8\%}

\newcommand{\mSTwoBA}{68.5}

\newcommand{\costD}{1.48}
\newcommand{\costDA}{1.23}
\newcommand{\costA}{1.02}
\newcommand{\binLoAccLo}{0.328}
\newcommand{\binLoAccHi}{0.401}
\newcommand{\binLoLiveLo}{28.6\%}
\newcommand{\binLoLiveHi}{47.1\%}
\newcommand{\abHeadAcc}{0.388}
\newcommand{\abHeadLive}{51.9\%}
\newcommand{\abNuAcc}{0.384}
\newcommand{\abNuLive}{57.2\%}
\newcommand{\failAccG}{0.742}
\newcommand{\failAccA}{0.739}
\newcommand{\xferCross}{0.490}
\newcommand{\xferDom}{0.176}
\newcommand{\spearEnd}{0.610}
\newcommand{\spearEarly}{0.840}
\newcommand{\witCscore}{0.001}
\newcommand{\witChg}{0.166}
\newcommand{\witAscore}{0.412}

\title{Where Groups Split: A Probe Prior for Nonzero Within-Group Variance in RL Post-Training}

\author{%
  Jonah Ellison \\
  University of Michigan \\
  \And
  Mateo Ruiz \\
  University of Michigan \\
  \And
  Owen Pell \\
  University of Michigan \\
  \AND
  Tomas Duncun \\
  University of Michigan \\
  \And
  Hanae Fujita \\
  University of Edinburgh \\
  \And
  Ingrid Solberg \\
  University of Edinburgh \\
  \And
  Caleb Nguyen \\
  University of Edinburgh
}

\begin{document}

\maketitle

\begin{abstract}
Group relative policy optimization learns from a prompt only when the rewards in the sampled group disagree.
With a binary verifier, a group that is entirely correct or entirely incorrect has within-group variance zero and a zero advantage.
Uniform sampling continues to spend rollouts on prompts that are unanimous now and stay unanimous as training goes on.
Pass rate is a weak forecast of that future.
Two prompts at the same pass rate can move apart: one into the range where a finite group splits, and one toward a ceiling where the group becomes unanimous again.
\method{} estimates the later pass rate from a short probe.
A small subset of the pool is trained with the same recipe, every prompt is scored on the base model and on the probe model, and the two pass rates are plugged into the probability that a group of the training size is all wrong or split.
The resulting weight is a static prior.
It can be used alone or multiplied by an online scheduler, and the GRPO update is left as it is.
On mathematical and logical reasoning with 1.7B and 4B models, the prior improves greedy accuracy over uniform GRPO and raises the fraction of groups with a nonzero advantage.
The same prior still helps when it is combined with dynamic sampling or with selective rollouts.
\end{abstract}

\section{Introduction}
\label{sec:intro}

Reinforcement learning with verifiable rewards is now a standard way to elicit reasoning from language models \citep{deepseekai2025r1,shao2024deepseekmath,yu2025dapo,yang2025qwen3}.
The usual update, group relative policy optimization (GRPO), draws several responses for one prompt and centers their rewards.
That centered residual is the advantage.
If every reward in the group is the same, the residual is the zero vector and the prompt does not contribute to the policy gradient.
With a binary verifier, this is a unanimous group.
Unanimous groups are common on hard pools, where most prompts are still failed by every sample, and they show up again late in training, once easy prompts are solved by every sample.

Recent methods already try to stop spending updates on these groups.
Dynamic sampling redraws a prompt until the group contains both outcomes \citep{yu2025dapo}.
Selective rollout skips a prompt before generation when the prompt looks uninformative \citep{zheng2025greso}.
Other work keeps the unanimous prompt and reshapes its advantage, or replaces a difficulty guess that uses no rollout with a filter that runs online \citep{le2025noprompt,sha2026thinkprior,jiang2025vcrl}.
These rules look at the group just drawn, or at a snapshot of how hard the prompt looks under the current policy.
Which of today's unanimous prompts will still be unanimous after the policy has moved is left open.

A prompt's pass rate is the chance that the current policy solves it.
For a group of size $G$, the unanimity probability is the fixed function $p^G+(1-p)^G$, and it is large near $0$ and near $1$.
Two prompts with the same pass rate therefore have the same chance of a dead group today.
After further training they can separate.
One can move off the floor into the range where groups split.
Another can climb from an already mixed rate toward the ceiling, so the groups seal and the advantage disappears while accuracy is still rising.
A score that rewards accuracy gain prefers the second prompt.
A score that rewards a future split prefers the first.

\method{} is a static prior for prompts of the first kind.
Before the main run, we train the same algorithm for a few epochs on a few hundred prompts drawn from the pool.
We then evaluate every prompt twice, once under the base model and once under the probe model, and we shrink the two empirical pass rates with a Jeffreys correction.
The score of a prompt is the product of two closed forms at the training group size: the probability that a group drawn at the probe pass rate is non-unanimous, and the drop in the probability that the group is entirely wrong.
Prompts whose pass rate falls, and prompts that arrive near the ceiling, receive a zero or near-zero score.
The sampling distribution mixes this score with the uniform distribution, so prompts that are already interior keep a floor of mass.
The same weight can multiply an online scheduler without changing the scheduler's update.

The prior is cheap relative to the main run, and it is tied to the recipe that produced it.
On the math pool the probe plus the two endpoint passes cost \totOverM{} of the main-run rollouts.
A prior fit to one model, one group size, and one reward has to be checked again if any of those change.
Section~\ref{sec:theory} records the identity that makes the score well-defined, and a small witness on which a headroom-gain score and \method{} disagree.
Section~\ref{sec:exp} tests the claim that this disagreement matters for accuracy and for the fraction of groups that actually split.

\paragraph{Contributions.}
\begin{itemize}[leftmargin=1.1em,itemsep=0.25em,topsep=0.3em]
  \item We treat future non-unanimity as the quantity that decides whether a prompt will yield a GRPO advantage, and we give the exact unanimity probability for binary groups together with the group-size dependence of that probability.
  \item We introduce \method{}, a probe estimator whose score multiplies the probe-future probability of a split group by the drop in the all-wrong probability, with a Jeffreys shrink on the endpoint pass rates and a uniform mixture on the weights.
  \item On math and logic, at 1.7B and 4B, the prior improves uniform GRPO and composes with dynamic sampling and selective rollout.
  Matched-pass-rate cuts, score ablations, and a constructed witness separate the effect from accuracy gain.
\end{itemize}

\section{Related work}
\label{sec:related}

\paragraph{Dead groups inside GRPO.}
GRPO centers binary or scalar rewards inside a group and optionally divides by the group standard deviation \citep{shao2024deepseekmath}.
The division is undefined or unstable when the group does not vary, and removing it does not create an advantage where the rewards are identical \citep{liu2025drgrpo,hu2025reinforcepp}.
Analyses of the GRPO loss make the same point from the objective side: the update is a contrast among the samples in the group, so a group with one repeated reward has nothing to contrast \citep{mroueh2025grpo,zheng2025gspo}.
Dynamic sampling, as used in large GRPO systems, discards the unanimous group and draws another \citep{yu2025dapo,yue2025vapo}.
The draw is honest about the current group and expensive when the prompt is sealed, because the redraws repeat until a split appears or a cap is hit.
Entropy-guided shaping and related advantage corrections try to extract a signal from the unanimous prompt itself \citep{le2025noprompt,wang2025entropy}.
\method{} does not modify the advantage.
It changes which prompts are offered to an unchanged GRPO update.

\paragraph{Online selection.}
A number of schedulers reweight prompts while training.
Moving-edge filters keep prompts whose recent rollouts sit near the decision boundary \citep{wu2026edge}.
\citet{bae2026filter} skip prompts that an online predictor marks as too easy or too hard.
\citet{mao2026dynamics} sample from a forecast of how a prompt's training value will move, rather than from its value at the current step, and \citet{qu2026small} fit a small predictor that can be reused across runs to decide which prompts deserve rollouts.
Selective rollout spends generation on prompts that are predicted to repay it \citep{zheng2025greso}.
Curricula from easy to hard, bandit curators, and gradient-based prompt scores make related online choices \citep{parashar2025curriculum,chen2025sec,gu2026actor,zeng2025cures}.
Once a run has history, medium-difficulty prompts can be replayed and pre-rollout screens can drop prompts that have stopped mixing successes and failures \citep{liang2026leeps,baroian2026replay,liu2025explore}.
\citet{mao2026rlvr} carry a similar filter into the update: groups that come back all correct or all wrong are treated as ineffective samples and down-weighted with an off-policy priority.
For multi-step agents the budget also has to be split across steps.
TRACE assigns a rollout budget over the steps of a search agent \citep{zou2026trace}, and zero-variance search queries have been recycled rather than dropped \citep{coelho2026recycle}.
All of these methods observe the policy that is training.
Early on, that observation is a weak forecast of which sealed prompts will open, and a prompt that is mixed today can seal later.
\method{} is computed once, from a probe of the same recipe, and is then held fixed or used as a multiplier on the online weight.

\paragraph{Static scores and what they miss.}
Offline filters rank a pool before training, often by the base model's pass rate, by a length or difficulty feature, or by a small supervised signal \citep{li2025limr,ye2025limo,shi2025adarf,sha2026thinkprior}.
Pass rate is the right sufficient statistic for the \emph{current} unanimity probability, and a poor one for the change in that probability.
Training dynamics papers make a related observation: many prompts stay solved or unsolved for the whole run, so a single snapshot hides the prompts that move \citep{foster2025frontier,zeng2025simplrl}.
Variance-based curricula use the spread of rewards as a difficulty coordinate and walk along it \citep{jiang2025vcrl}.
That spread, at a fixed time, is again a function of the current pass rate.
\method{} uses two times, and it uses them only through the unanimity formulas at the group size the main run will actually draw.
The estimator is therefore allowed to prefer a small accuracy gain that enters the interior over a large accuracy gain that exits it.

Adjacent evidence says the gradient of a reasoning policy is uneven in other ways as well.
A minority of high-entropy tokens can dominate the update \citep{wang2025entropy}, policy entropy can collapse even while accuracy is still moving \citep{cui2025entropy}, and verifiable rewards can be partly spurious \citep{shao2025spurious,yue2025incentivize}.
Those are reasons to be careful about what a binary group means.
They do not remove the mechanical fact that a unanimous group has a zero centered advantage.
We hold the reward, the advantage, and the optimizer fixed, and we only change the prompt distribution.
Broader recipes that change the loss, the horizon, or the mixture across domains sit outside that comparison \citep{liu2025prorl,lin2025cppo,zeng2025eachprompt,li2025multidomain}.

\section{When a group teaches nothing}
\label{sec:theory}

\subsection{Unanimity is a function of the pass rate}
\label{sec:identity}

Let a prompt have unknown pass rate $p\in[0,1]$ under the current policy, and let a group of $G\ge 2$ rewards be drawn i.i.d.\ from $\mathrm{Bernoulli}(p)$.
Write $m$ for the group mean.
The group is unanimous when $r_1=\cdots=r_G$, which is exactly when the within-group variance is zero.

\begin{lemma}[Dead groups]
\label{lem:dead}
The group is unanimous with probability
\begin{equation}
\label{eq:collapse}
c_G(p)=p^G+(1-p)^G,
\end{equation}
and on that event every centered advantage $a_i\propto(r_i-m)$ is zero.
The probability that the group is entirely wrong is $\cfail_G(p)=(1-p)^G$.
The probability that it splits, and therefore that the within-group variance is positive, is
\begin{equation}
\label{eq:live}
\nu_G(p)=1-c_G(p).
\end{equation}
\end{lemma}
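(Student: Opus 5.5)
The plan is to use independence to compute the probability of each unanimous outcome, and then check separately that unanimity forces a zero advantage.

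\emph{Step 1: the all-correct and all-wrong events.} Since $r_1,\dots,r_G$ are i.i.d.\ $\mathrm{Bernoulli}(p)$, the probabilities factor:
\[
\Pr(r_1=\cdots=r_G=1)=\prod_{i=1}^G\Pr(r_i=1)=p^G,
\qquad
\Pr(r_1=\cdots=r_G=0)=(1-p)^G .
\]
The second expression is $\cfail_G(p)$.

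\emph{Step 2: the unanimity probability.} Each reward takes values in $\{0,1\}$. So the event $\{r_1=\cdots=r_G\}$ is the disjoint union of the all-ones event and the all-zeros event. The two events are disjoint because $G\ge 1$ and $0\neq 1$, so their probabilities add and give $c_G(p)$ in \eqref{eq:collapse}. At the endpoints $p\in\{0,1\}$ the convention $0^0$ never appears, because $G\ge 2$. Both formulas therefore hold on the closed interval.

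\emph{Step 3: zero advantage and the split probability.} On the unanimous event every $r_i$ equals a common value $r$. Then $m=\frac1G\sum_i r_i=r$, so $r_i-m=0$ for every $i$. Any advantage proportional to $r_i-m$ is therefore zero. The same holds after dividing by a group standard deviation under whatever convention is used when that deviation is zero, since the numerator already vanishes.

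Next, the within-group variance $\frac1G\sum_i (r_i-m)^2$ is a sum of nonnegative terms. It is zero exactly when every $r_i=m$, that is, exactly on the unanimous event. So positive variance is the complement of unanimity, and it has probability $1-c_G(p)=\nu_G(p)$, as in \eqref{eq:live}.

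No step is a real obstacle. The only points needing care are the disjointness argument in Step 2 and the equivalence between zero variance and unanimity in Step 3, which is what justifies calling the complement event ``splits.''
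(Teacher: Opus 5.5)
Your proof is correct and follows the paper's argument essentially step for step. The steps are: the unanimous event as the disjoint union of the all-ones and all-zeros events, independence giving $p^G$ and $(1-p)^G$, $r_i=m$ on that event, and a zero within-group second moment exactly when every $r_i=m$. The only refinement in the paper is that it writes the advantage as $a_i=c\,(r_i-m)$ for any finite group-level scalar $c$, with the convention $c=0$ when $\widehat\sigma=0$; this is the precise form of your remark about dividing by the group standard deviation.
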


The expected residual sum of squares is $(G-1)p(1-p)$ (Lemma~\ref{lem:energy} in the appendix), so expected energy and the probability of a dead group are different summaries.
Both are minimized or maximized at $p=1/2$, and for a \emph{single} snapshot they induce the same ranking of prompts.
The ranking that matters for allocation is the ranking of \emph{pairs} $(p_0,p_1)$, and that ranking depends on $G$.

\begin{proposition}[Shape and group size]
\label{prop:shape}
For $G\ge 2$, $c_G$ is strictly decreasing on $[0,1/2]$ and strictly increasing on $[1/2,1]$.
For every $p\in(0,1)$ and every $G\ge 1$, $c_{G+1}(p)<c_G(p)$.
\end{proposition}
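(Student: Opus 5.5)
The plan is to prove the two claims of Proposition~\ref{prop:shape} separately. Both are direct computations with $c_G(p)=p^G+(1-p)^G$ from Lemma~\ref{lem:dead}. For the shape claim I will differentiate. For the group-size claim I will compare the two polynomials term by term.

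For monotonicity in $p$, I compute
\[
c_G'(p)=G\bigl(p^{G-1}-(1-p)^{G-1}\bigr).
\]
Since $G\ge 2$, the exponent $G-1\ge 1$, so $t\mapsto t^{G-1}$ is strictly increasing on $[0,1]$. The sign of $c_G'(p)$ is therefore the sign of $p-(1-p)=2p-1$. It is strictly negative on $[0,1/2)$, zero only at $p=1/2$, and strictly positive on $(1/2,1]$. A continuous function whose derivative is negative except at a single endpoint of the interval is strictly decreasing there, by the mean value theorem. This gives strict decrease on the closed interval $[0,1/2]$ and, by the same argument, strict increase on $[1/2,1]$. One could also note the symmetry $c_G(p)=c_G(1-p)$ and prove only one half. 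The restriction $G\ge 2$ is essential, since $c_1\equiv 1$ is constant, and I will point this out.

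For the group-size comparison, fix $p\in(0,1)$ and write
\[
c_G(p)-c_{G+1}(p)=p^G(1-p)+(1-p)^G p .
\]
Both terms are strictly positive because $p$ and $1-p$ lie in $(0,1)$. Hence $c_{G+1}(p)<c_G(p)$ for every $G\ge 1$, including $G=1$, where $c_1=1$. The endpoints are excluded because $c_G(0)=c_G(1)=1$ for every $G$.

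No step is a genuine obstacle. The only point needing care is strictness at the boundary point $p=1/2$, where the derivative vanishes. Appealing to the mean value theorem, or to the fact that a vanishing derivative at isolated points does not break strict monotonicity, settles this.
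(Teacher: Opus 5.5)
Your proposal is correct and follows the paper's own proof. Both differentiate to get $c_G'(p)=G\bigl(p^{G-1}-(1-p)^{G-1}\bigr)$ and read off its sign, then factor $c_G(p)-c_{G+1}(p)=p^G(1-p)+p(1-p)^G>0$; your explicit mean-value-theorem step for strictness on the closed intervals is a small piece of care the paper leaves implicit.
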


Larger groups hide the problem: a pass rate of $0.2$ is usually unanimous at $G=4$ and rarely unanimous at $G=16$ (Table~\ref{tab:grid} in the appendix).
A prior that plugs in the wrong group size is a prior for a different experiment.
Figure~\ref{fig:mech}a plots the identity.
The curve at the training size $G=\grp$ is the thick one.

\subsection{The same pass rate, two futures}
\label{sec:futures}

Consider three prompts with population pass rates, before any shrink, and the natural headroom-gain score $s_{\mathrm{hg}}(p_0,p_1)=(1-p_0)[p_1-p_0]_+$.
Prompt A moves from $0.05$ to $0.20$.
Prompt B moves from $0.25$ to $0.55$.
Prompt C moves from $0.55$ to $0.92$.
Headroom-gain ranks C above A at every group size, because C gains $0.37$ accuracy from a pass rate that still has headroom.
At $G=8$, which is the group size we train with, \method{}'s population score (defined in Section~\ref{sec:method}) assigns A the value \witAscore{} and C the value \witCscore{}, against headroom-gain values $0.143$ and \witChg{}.
C is the prompt that climbs into the ceiling.
Its groups stop splitting.
A is the prompt that leaves the floor.
The full witness, including a reversal between A and B as $G$ goes from $4$ to $16$, is Proposition~\ref{prop:reverse} and Table~\ref{tab:witness}.

The same pattern appears in a probe.
Figure~\ref{fig:mech}b plots endpoint pass rates for a diagnostic draw from the math pool, colored by the \method{} score.
Points that start near the origin and move up are dark.
Points that are already high and move higher are pale.
Figure~\ref{fig:mech}c follows the all-wrong probability of the top and bottom quartiles of the score through the main run.
The two quartiles start near each other.
The high-score quartile's all-wrong probability falls, and the low-score quartile stays sealed.
The base pass rate is only the horizontal axis in panel (b).
The score also depends on where the prompt goes.

\begin{figure}[t]
  \centering
  \includegraphics[width=\textwidth]{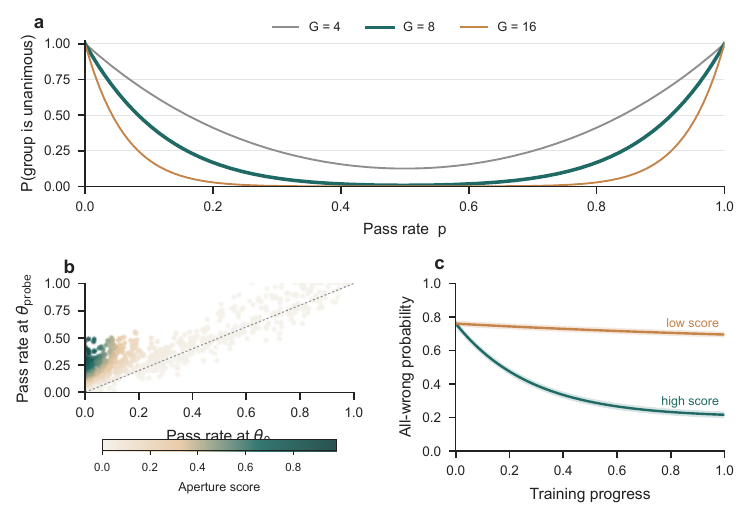}
  \caption{Unanimity, and which prompts leave it.
  (a)~Exact probability that $G$ i.i.d.\ Bernoulli rewards are unanimous.
  The training group size is $G=\grp$.
  (b)~Endpoint pass rates on a diagnostic draw, colored by the \method{} score.
  Darker points are prompts the score will upweight.
  (c)~All-wrong probability during the main math run for the top and bottom quartiles of the score, on Qwen3-1.7B.
  Bands are standard deviations across three seeds.}
  \label{fig:mech}
\end{figure}

\section{\method{}}
\label{sec:method}

\subsection{A short probe and two endpoints}
\label{sec:probe}

Let $D$ be the training pool and let $P\subset D$ be a uniform subset with $|P|=\nprobe$.
Starting from the base weights $\theta_0$, we run GRPO on $P$ for $\epProbe$ epochs with the same group size, reward, and optimizer the main run will use.
Call the result $\thprobe$.
The probe is not a curriculum over $D$.
It is a cheap exposure of the base model to the optimization signal that $D$ produces, so that $\thprobe$ stands in for a later policy.
We then draw $\nEval$ rollouts on every $x\in D$ under $\theta_0$ and under $\thprobe$.
If $k$ of those rollouts are correct, the endpoint pass rate is the Jeffreys estimator
\begin{equation}
\label{eq:jeffreys}
\tilde\rho=\frac{k+\tfrac12}{\nEval+1}.
\end{equation}
Write $\tilde\rho_0(x)$ and $\tilde\rho_1(x)$ for the two endpoints.
Four failures out of four are not treated as a pass rate of zero.
With $\nEval$ evaluation rollouts the empty count maps to $1/(2(\nEval+1))=0.1$, and the implied all-wrong probability at $G=\grp$ falls from $1$ to $0.9^{8}=0.430$ (Appendix~\ref{app:shrink}).
The ablation that removes the shrink loses accuracy, so the correction is doing work beyond smoothing a plot.

\subsection{The score}
\label{sec:score}

\begin{equation}
\label{eq:score}
s(x)=\nu_G\bigl(\tilde\rho_1(x)\bigr)\,
\Bigl[\cfail_G\bigl(\tilde\rho_0(x)\bigr)-\cfail_G\bigl(\tilde\rho_1(x)\bigr)\Bigr]_+.
\end{equation}
The first factor is the probability that a group of the training size splits at the probe pass rate.
The second factor is the drop in the probability of an all-wrong group.
Their product is large when the probe moves the prompt off the floor and the destination is still a split, and it is small when the destination is a ceiling.
Because $\cfail_G$ is strictly decreasing on $[0,1]$, the positive part is zero whenever $\tilde\rho_1(x)\le\tilde\rho_0(x)$.
A prompt whose pass rate falls receives score zero, including one that falls from the ceiling back into the interior.
We leave that prompt at the uniform floor, on the view that a falling probe pass rate is as likely to be forgetting as it is to be a useful return to the mixed range.

The estimator is the product in \eqref{eq:score}.
Training does not fit a separate scorer or value model, and $s$ stays fixed during the main run.
The group size inside $\nu_G$ and $\cfail_G$ is the group size of the main run.
$\nEval$ is only the number of rollouts used to estimate each pass rate.

\subsection{Weights, and the online product}
\label{sec:weights}

Scores are skewed, and a pure proportional sample collapses onto a handful of prompts.
We mix the powered score with the uniform distribution,
\begin{equation}
\label{eq:mix}
w(x)=\frac{1-\lambda}{|D|}+\lambda\,\frac{s(x)^{\alpha}}{\sum_{x'\in D}s(x')^{\alpha}},
\end{equation}
with $\lambda=\mixLam$ and $\alpha=\mixAlpha$.
If every score is zero, $w$ is uniform.
Every prompt keeps probability at least $(1-\lambda)/|D|$, which is the mass we deliberately leave on prompts that are already interior and therefore have little opening left to score.
An online scheduler with weight $w_{\mathrm{on}}(x,t)$ is composed by multiplication,
\begin{equation}
\label{eq:product}
w_{\mathrm{final}}(x,t)=w(x)\,w_{\mathrm{on}}(x,t),
\end{equation}
renormalized over the pool.
The weight is therefore available from the first step, and the online term can take over as evidence from the main run accumulates.
Algorithm~\ref{alg:aperture} puts the steps in order.
The main-run GRPO loss, clipping, and KL penalty are exactly the baseline's.

\begin{algorithm}[t]
\caption{\method{} prior, then GRPO}
\label{alg:aperture}
\begin{algorithmic}[1]
\Require Pool $D$, probe size $\nprobe$, epochs $\epProbe$, group size $G$, eval rollouts $\nEval$, mixture $\lambda$, power $\alpha$
\State Draw $P\subset D$ uniformly, $|P|=\nprobe$
\State Train $\thprobe$ by GRPO on $P$ for $\epProbe$ epochs, starting from $\theta_0$
\For{$x\in D$}
  \State Estimate $\tilde\rho_0(x),\tilde\rho_1(x)$ by \eqref{eq:jeffreys} with $\nEval$ rollouts
  \State $s(x)\leftarrow$ \eqref{eq:score}
\EndFor
\State $w\leftarrow$ \eqref{eq:mix}
\State Train on $D$ for $\epMain$ epochs with prompt distribution $w$, or with $w\cdot w_{\mathrm{on}}$
\end{algorithmic}
\end{algorithm}

\begin{figure}[t]
  \centering
  \includegraphics[width=\textwidth]{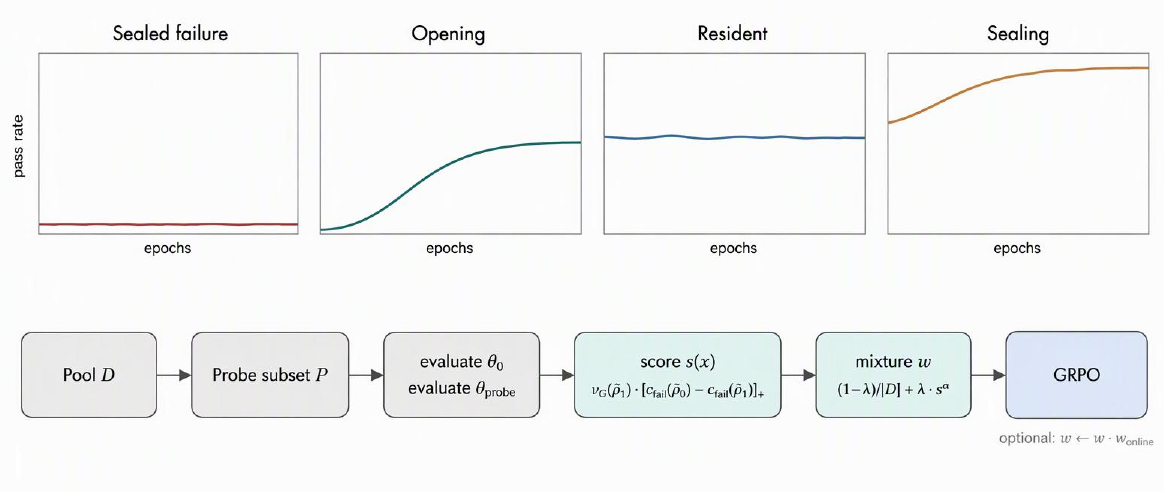}
  \caption{The \method{} prior.
  The top row sketches four pass-rate trajectories the probe is meant to tell apart: sealed failure, opening, a resident interior prompt, and sealing into the ceiling.
  The bottom row is the computation: a uniform probe, two endpoint evaluations, the score in \eqref{eq:score}, the mixture in \eqref{eq:mix}, and GRPO.
  An online scheduler, when one is used, multiplies $w$ and does not replace it.}
  \label{fig:overview}
\end{figure}

\section{Experiments}
\label{sec:exp}

\subsection{Setup}
\label{sec:setup}

We train two verifiable pools.
Math uses the DAPO-Math release, \poolMath{} integer-answer problems \citep{yu2025dapo}.
Logic uses \poolLogic{} prompts: Knights-and-Knaves puzzles from Logic-RL \citep{xie2025logicrl} together with ordering and graph puzzles from the logic slice of GURU \citep{cheng2025guru}.
ARC-AGI-2 is held out of training and used only as a transfer evaluation \citep{chollet2025arc2}.
The reward is exact match, $1$ or $0$, with no partial credit.
That is the Bernoulli setting of Lemma~\ref{lem:dead}.

Math evaluation is greedy pass@1 on MATH-500 \citep{hendrycks2021math}, Minerva \citep{lewkowycz2022minerva}, AIME 2024, AIME 2025, OlympiadBench \citep{he2024olympiad}, and AMC 2023.
Logic evaluation is greedy pass@1 on held-out Knights-and-Knaves at two size bands, held-out ordering, held-out graph puzzles, and ARC-AGI-2.
We report the unweighted macro-average.
Best accuracy is the peak of a five-point sliding window on checkpoints taken every tenth of training.
AUC is the mean of that checkpoint curve.
Live is the fraction of generated training groups whose rewards are not unanimous, averaged over the run.
S2B is the percent of training progress at which a prior-augmented run first matches the best accuracy of its paired baseline.
Lower is faster.
Uniform has no paired baseline, so its S2B cell is empty.

The main run is $\epMain$ epochs, prompt batch $64$, $G=\grp$ rollouts, learning rate $10^{-6}$, KL coefficient $10^{-3}$, and clip range $0.2$.
Generation length is $3072$ tokens on math and $2048$ on logic.
The probe uses the same hyperparameters for $\epProbe$ epochs on $\nprobe$ prompts.
Models are Qwen3-1.7B and Qwen3-4B \citep{yang2025qwen3}.
A Llama-3.2-3B run is in Appendix~\ref{app:llama} \citep{grattafiori2024llama}.
Training is implemented in Trinity-RFT on the HybridFlow GRPO stack \citep{pan2025trinity,sheng2024hybridflow}, on $8\times$A100-80GB.
Every reported mean is over three seeds.
Appendix~\ref{app:protocol} records the remaining choices: decoding, answer parsing, the held-out splits, and the overhead arithmetic behind \totOverM{}.

\paragraph{Baselines.}
Uniform GRPO draws prompts uniformly and keeps every group, unanimous or not.
Dynamic sampling is the DAPO rule with the rest of the update held fixed: a unanimous group is discarded and the prompt is redrawn, up to a cap, so the optimizer sees only split groups \citep{yu2025dapo}.
Selective rollout skips prompts a pre-rollout filter marks as uninformative, following the public recipe of \citet{zheng2025greso}.
\method{} is crossed with each of them by \eqref{eq:product}.
We do not retune the baselines' internal thresholds when the prior is attached.
Appendix~\ref{app:compose} adds the same product on top of a moving-edge filter.

\subsection{Main results}
\label{sec:mainresults}

\begin{table}[t]
  \caption{Math. Greedy macro-accuracy, area under the checkpoint curve, fraction of non-unanimous training groups, and steps-to-baseline in percent of training.
  S2B compares each prior-augmented row with the scheduler underneath it.
  Means over three seeds.
  The best accuracy and the best live rate in each model block are in bold.}
  \label{tab:math}
  \centering
  \small
  \setlength{\tabcolsep}{3.2pt}
  \begin{tabular}{lcccccccc}
\toprule
& \multicolumn{4}{c}{Qwen3-1.7B} & \multicolumn{4}{c}{Qwen3-4B} \\
\cmidrule(lr){2-5}\cmidrule(lr){6-9}
Method & Acc $\uparrow$ & AUC $\uparrow$ & Live $\uparrow$ & S2B $\downarrow$ & Acc $\uparrow$ & AUC $\uparrow$ & Live $\uparrow$ & S2B $\downarrow$ \\
\midrule
Uniform & 0.377 & 0.344 & 45.8\% & -- & 0.481 & 0.439 & 53.4\% & -- \\
Aperture & 0.395 & 0.368 & 60.7\% & 68.5 & 0.497 & 0.463 & 67.2\% & 62.4 \\
Dyn.\ sampling & 0.385 & 0.353 & 47.1\% & -- & 0.487 & 0.447 & 54.8\% & -- \\
Dyn.\ + Aperture & \textbf{0.401} & 0.376 & 62.8\% & 76.1 & \textbf{0.502} & 0.470 & 68.9\% & 69.4 \\
Selective & 0.382 & 0.350 & 53.3\% & -- & 0.486 & 0.444 & 59.8\% & -- \\
Sel.\ + Aperture & 0.398 & 0.372 & \textbf{64.1\%} & 74.8 & 0.499 & 0.466 & \textbf{70.1\%} & 67.9 \\
\bottomrule
\end{tabular}

\end{table}

\begin{table}[t]
  \caption{Logic, with the same columns and the same protocol as Table~\ref{tab:math}.}
  \label{tab:logic}
  \centering
  \small
  \setlength{\tabcolsep}{3.2pt}
  \begin{tabular}{lcccccccc}
\toprule
& \multicolumn{4}{c}{Qwen3-1.7B} & \multicolumn{4}{c}{Qwen3-4B} \\
\cmidrule(lr){2-5}\cmidrule(lr){6-9}
Method & Acc $\uparrow$ & AUC $\uparrow$ & Live $\uparrow$ & S2B $\downarrow$ & Acc $\uparrow$ & AUC $\uparrow$ & Live $\uparrow$ & S2B $\downarrow$ \\
\midrule
Uniform & 0.171 & 0.156 & 37.2\% & -- & 0.211 & 0.192 & 42.8\% & -- \\
Aperture & 0.185 & 0.172 & 52.1\% & 73.2 & 0.223 & 0.208 & 57.4\% & 59.8 \\
Dyn.\ sampling & 0.177 & 0.162 & 38.6\% & -- & 0.216 & 0.198 & 44.1\% & -- \\
Dyn.\ + Aperture & \textbf{0.190} & 0.178 & 54.4\% & 80.5 & \textbf{0.227} & 0.213 & 59.2\% & 64.7 \\
Selective & 0.175 & 0.160 & 44.1\% & -- & 0.214 & 0.196 & 49.3\% & -- \\
Sel.\ + Aperture & 0.187 & 0.175 & \textbf{55.8\%} & 78.4 & 0.225 & 0.210 & \textbf{60.7\%} & 63.1 \\
\bottomrule
\end{tabular}

\end{table}

Tables~\ref{tab:math} and~\ref{tab:logic} summarize the comparison.
On Qwen3-1.7B math, uniform GRPO reaches accuracy \mAccG{} and a live-group rate of \mLiveG{}.
\method{} reaches \mAccA{} and \mLiveA{}, and it matches the uniform run's best accuracy at \mSTwoBA{}\% of training.
Dynamic sampling alone reaches \mAccD{}.
The gain is smaller because the rule replaces a unanimous group after drawing it, and the prompt distribution itself stays close to uniform.
Attaching the prior raises that row to \mAccDA{} and the live rate to \mLiveDA{}.
The same order holds on the 4B math run, from \MAccG{} to \MAccA{} for the prior alone and to \MAccDA{} for the product with dynamic sampling, and on both logic scales, from \lAccG{} to \lAccA{} at 1.7B and from \LAccG{} to \LAccA{} at 4B.
Selective rollout is the weakest of the three baselines and the product with \method{} recovers a gain close to the prior alone.
Per-benchmark numbers are in Appendix~\ref{app:perbench}.
The gains are larger on AIME and on the hard Knights-and-Knaves band than on MATH-500, which is already the strongest cell under the uniform run.

Figure~\ref{fig:results} shows the 1.7B math trajectories.
The accuracy curves separate early and keep that order.
The live-group curves separate by more.
Dynamic sampling barely changes the fraction of generated groups that split, because the extra groups drawn to replace unanimous ones are still counted as generated.
\method{} changes which prompts those groups come from.
The rollout bill for that behavior is in Table~\ref{tab:cost} of the appendix.
Relative to uniform, dynamic sampling generates $\costD\times$ as many groups per update.
The product with \method{} generates $\costDA\times$.
The prior itself, ignoring the one-time probe, generates $\costA\times$.
Logic trajectories are Figure~\ref{fig:logic} in the appendix and have the same shape at a lower accuracy.

\begin{figure}[t]
  \centering
  \includegraphics[width=\textwidth]{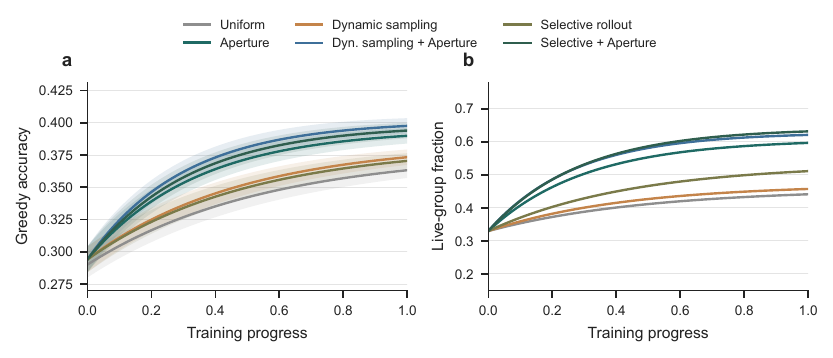}
  \caption{Qwen3-1.7B on math.
  (a)~Greedy macro-accuracy.
  (b)~Fraction of generated groups that are not unanimous.
  Curves are means over three seeds.
  The band is the seed standard deviation.}
  \label{fig:results}
\end{figure}

\subsection{Matched pass rate}
\label{sec:matched}

The claim in Section~\ref{sec:futures} is that the score carries information beyond the initial pass rate.
We test it by stratification.
On each domain, with Qwen3-1.7B, prompts are ranked by $\tilde\rho_0$ and cut into terciles.
Inside a tercile, prompts are split by $s(x)$ into equal high-score and low-score halves.
The halves match in mean $\tilde\rho_0$ by construction of the cut.
Each half is then a training pool of its own, under the same GRPO recipe, for the same number of epochs.

\begin{table}[t]
  \caption{Within a tercile of the initial pass rate, on Qwen3-1.7B.
  High and low refer to the \method{} score.
  Pre is the mean initial pass rate and matches across the two halves.}
  \label{tab:within}
  \centering
  \small
  \setlength{\tabcolsep}{4pt}
  \begin{tabular}{llcccccccc}
\toprule
& & \multicolumn{4}{c}{Math} & \multicolumn{4}{c}{Logic} \\
\cmidrule(lr){3-6}\cmidrule(lr){7-10}
Tercile & Sub-pool & Pre & Acc & AUC & Live & Pre & Acc & AUC & Live \\
\midrule
Low & low score & 0.041 & 0.328 & 0.301 & 28.6\% & 0.012 & 0.118 & 0.096 & 21.4\% \\
& high score & 0.042 & 0.401 & 0.364 & 47.1\% & 0.013 & 0.164 & 0.141 & 39.2\% \\
Mid & low score & 0.186 & 0.356 & 0.329 & 43.8\% & 0.074 & 0.139 & 0.118 & 33.1\% \\
& high score & 0.185 & 0.398 & 0.368 & 61.2\% & 0.073 & 0.171 & 0.149 & 50.8\% \\
High & low score & 0.612 & 0.384 & 0.352 & 40.2\% & 0.286 & 0.176 & 0.151 & 34.8\% \\
& high score & 0.609 & 0.416 & 0.386 & 57.4\% & 0.284 & 0.198 & 0.172 & 48.6\% \\
\bottomrule
\end{tabular}

\end{table}

Every cell of Table~\ref{tab:within} has the same direction.
In each tercile, and on both domains, the high-score half reaches higher accuracy, higher AUC, and a higher live rate than the low-score half with the same initial pass rate.
The largest accuracy gap on math is the low tercile, \binLoAccLo{} against \binLoAccHi{}, with live rate \binLoLiveLo{} against \binLoLiveHi{}.
A filter that ranks only by pass rate treats these prompts as equally hopeless.
The probe splits them, and the later training run follows that split.
The high tercile is the ceiling side of the same statement.
Prompts there already have pass rates around $0.6$.
The low-score half of that tercile includes the climbs that seal, and it underperforms the high-score half even though it starts at the same place.

\subsection{What the score is made of}
\label{sec:ablation}

\begin{table}[t]
  \caption{Score ablations on Qwen3-1.7B.
  Headroom $\times$ gain is $(1-\tilde\rho_0)[\tilde\rho_1-\tilde\rho_0]_+$.
  Destination-only uses $\nu_G(\tilde\rho_1)$.
  Fail-collapse drop uses only the second factor of \eqref{eq:score}.
  $\lambda=1$ removes the uniform mixture.}
  \label{tab:ablation}
  \centering
  \small
  \setlength{\tabcolsep}{4.5pt}
  \begin{tabular}{lcccccc}
\toprule
& \multicolumn{3}{c}{Math} & \multicolumn{3}{c}{Logic} \\
\cmidrule(lr){2-4}\cmidrule(lr){5-7}
Variant & Acc & AUC & Live & Acc & AUC & Live \\
\midrule
Uniform & 0.377 & 0.344 & 45.8\% & 0.171 & 0.156 & 37.2\% \\
Headroom $\times$ gain & 0.388 & 0.356 & 51.9\% & 0.179 & 0.164 & 42.8\% \\
Destination $\nu_G$ only & 0.384 & 0.353 & 57.2\% & 0.176 & 0.162 & 48.6\% \\
Fail-collapse drop only & 0.386 & 0.355 & 54.6\% & 0.178 & 0.163 & 46.2\% \\
No Jeffreys shrink & 0.391 & 0.361 & 58.8\% & 0.182 & 0.168 & 49.8\% \\
Aperture ($\lambda{=}1$) & 0.389 & 0.360 & 63.1\% & 0.180 & 0.166 & 54.9\% \\
Aperture & 0.395 & 0.368 & 60.7\% & 0.185 & 0.172 & 52.1\% \\
\bottomrule
\end{tabular}

\end{table}

Table~\ref{tab:ablation} removes one piece at a time.
Headroom $\times$ gain is the strongest accuracy-only alternative, at \abHeadAcc{} on math against \mAccA{} for the full score, and its live rate is \abHeadLive{} against \mLiveA{}.
It improves on uniform sampling because a prompt that gains accuracy often does open.
It also keeps the ceiling climbs that \eqref{eq:score} is meant to leave alone, which is why its live rate stays lower.
Destination-only, $\nu_G(\tilde\rho_1)$ with no requirement that the all-wrong probability fall, reaches a high live rate (\abNuLive{}) and a weaker accuracy (\abNuAcc{}).
It puts weight on prompts that are already interior, including prompts whose pass rate is flat.
The drop term alone finds prompts that leave the floor, including some that are already on their way to the ceiling, and lands between the two ablations.
Removing the Jeffreys shrink, or setting $\lambda=1$ so that interior prompts can receive zero weight, each costs a few points of accuracy.
The mixture is what keeps resident prompts in the pool after their opening has been spent.
Appendix~\ref{app:sweeps} varies $|P|$, the probe length, $\lambda$, $\alpha$, and the group size plugged into the score.
The main setting, $|P|=\nprobe$, $\epProbe$ probe epochs, $\lambda=\mixLam$, $\alpha=\mixAlpha$, and $G' = G = \grp$, is at or next to the best cell in each sweep, and the neighboring cells stay above uniform.

\section{Conclusion}
\label{sec:conclusion}

For binary rewards, a unanimous group has probability $p^G+(1-p)^G$ and a zero advantage.
Prompts with the same $p$ can be moving into that event or out of it.
\method{} estimates the direction from a short probe and two endpoint pass rates, then turns the estimate into a sampling weight that multiplies an online scheduler when one is used.
On the pools and models we trained, the prior raises accuracy and the fraction of groups that split, and it reduces the redraws that dynamic sampling spends on still-sealed prompts.
The GRPO loss, clip, and KL penalty stay the same as in the uniform baseline.

The prior is tied to the recipe that produced the probe.
A prior fit on the 1.7B math run and reused at 4B keeps part of the gain (\xferCross{} against an in-regime \MAccA{} and a uniform \MAccG{}).
A prior fit on math and reused on logic keeps less (\xferDom{} against an in-regime \lAccA{}).
On an easy slice whose prompts are already solved at high pass rate, accuracy moves from \failAccG{} to \failAccA{}.
On that slice the prompts have already left the all-wrong regime, and giving them less weight does not raise accuracy.
Appendix~\ref{app:discussion} discusses staleness, the binary-reward assumption, and the prompts the positive-part refuses to chase.

\bibliographystyle{unsrtnat}
\bibliography{custom}

@article{deepseekai2025r1,
  author = {{DeepSeek-AI} and Guo, Daya and Yang, Dejian and Zhang, Haowei and Song, Junxiao and Wang, Peiyi and Zhu, Qihao and Xu, Runxin and others},
  title = {{DeepSeek-R1: Incentivizing Reasoning Capability in LLMs via Reinforcement Learning}},
  journal = {Nature},
  volume = {645},
  pages = {633--638},
  year = {2025}
}

@article{shao2024deepseekmath,
  author = {Shao, Zhihong and Wang, Peiyi and Zhu, Qihao and Xu, Runxin and Song, Junxiao and Bi, Xiao and Zhang, Haowei and Zhang, Mingchuan and Li, {Y.K.} and Wu, {Y.} and Guo, Daya},
  title = {{DeepSeekMath: Pushing the Limits of Mathematical Reasoning in Open Language Models}},
  journal = {arXiv preprint arXiv:2402.03300},
  year = {2024},
  eprint = {2402.03300},
  archivePrefix = {arXiv},
  url = {https://arxiv.org/abs/2402.03300}
}

@article{yu2025dapo,
  author = {Yu, Qiying and Zhang, Zheng and Zhu, Ruofei and Yuan, Yufeng and Zuo, Xiaochen and Yue, Yu and Dai, Weinan and Fan, Tiantian and Liu, Gaohong and Liu, Lingjun and Liu, Xin and Lin, Haibin and Lin, Zhiqi and Ma, Bole and Sheng, Guangming and Tong, Yuxuan and Zhang, Chi and Zhang, Mofan and Zhang, Wang and Zhu, Hang and Zhu, Jinhua and Chen, Jiaze and Chen, Jiangjie and Wang, Chengyi and Yu, Hongli and Song, Yuxuan and Wei, Xiangpeng and Zhou, Hao and Liu, Jingjing and Ma, {Wei-Ying} and Zhang, {Ya-Qin} and Yan, Lin and Qiao, Mu and Wu, Yonghui and Wang, Mingxuan},
  title = {{DAPO: An Open-Source LLM Reinforcement Learning System at Scale}},
  journal = {arXiv preprint arXiv:2503.14476},
  year = {2025},
  eprint = {2503.14476},
  archivePrefix = {arXiv},
  url = {https://arxiv.org/abs/2503.14476}
}

@article{yang2025qwen3,
  author = {Yang, An and Li, Anfeng and Yang, Baosong and Zhang, Beichen and Hui, Binyuan and Zheng, Bo and Yu, Bowen and Gao, Chang and others},
  title = {{Qwen3 Technical Report}},
  journal = {arXiv preprint arXiv:2505.09388},
  year = {2025},
  eprint = {2505.09388},
  archivePrefix = {arXiv},
  url = {https://arxiv.org/abs/2505.09388}
}

@article{liu2025drgrpo,
  author = {Liu, Zichen and Chen, Changyu and Li, Wenjun and Qi, Penghui and Pang, Tianyu and Du, Chao and Lee, {Wee Sun} and Lin, Min},
  title = {{Understanding R1-Zero-Like Training: A Critical Perspective}},
  journal = {arXiv preprint arXiv:2503.20783},
  year = {2025},
  eprint = {2503.20783},
  archivePrefix = {arXiv},
  url = {https://arxiv.org/abs/2503.20783}
}

@article{mroueh2025grpo,
  author = {Mroueh, Youssef},
  title = {{Reinforcement Learning with Verifiable Rewards: GRPO's Effective Loss, Dynamics, and Success Amplification}},
  journal = {arXiv preprint arXiv:2503.06639},
  year = {2025},
  eprint = {2503.06639},
  archivePrefix = {arXiv},
  url = {https://arxiv.org/abs/2503.06639}
}

@article{zheng2025gspo,
  author = {Zheng, Chujie and Liu, Shixuan and Li, Mingze and Chen, {Xiong-Hui} and Yu, Bowen and Gao, Chang and Dang, Kai and Liu, Yuqiong and Men, Rui and Yang, An and Zhou, Jingren and Lin, Junyang},
  title = {{Group Sequence Policy Optimization}},
  journal = {arXiv preprint arXiv:2507.18071},
  year = {2025},
  eprint = {2507.18071},
  archivePrefix = {arXiv},
  url = {https://arxiv.org/abs/2507.18071}
}

@article{hu2025reinforcepp,
  author = {Hu, Jian and Liu, {Jason Klein} and Xu, Haotian and Shen, Wei},
  title = {{REINFORCE++: Stabilizing Critic-Free Policy Optimization with Global Advantage Normalization}},
  journal = {arXiv preprint arXiv:2501.03262},
  year = {2025},
  eprint = {2501.03262},
  archivePrefix = {arXiv},
  url = {https://arxiv.org/abs/2501.03262}
}

@article{zheng2025greso,
  author = {Zheng, Haizhong and Zhou, Yang and Bartoldson, {Brian R.} and Kailkhura, Bhavya and Lai, Fan and Zhao, Jiawei and Chen, Beidi},
  title = {{Act Only When It Pays: Efficient Reinforcement Learning for LLM Reasoning via Selective Rollouts}},
  journal = {arXiv preprint arXiv:2506.02177},
  year = {2025},
  eprint = {2506.02177},
  archivePrefix = {arXiv},
  url = {https://arxiv.org/abs/2506.02177}
}

@inproceedings{le2025noprompt,
  author = {Le, {Thanh-Long V.} and Jeon, Myeongho and Vu, Kim and Lai, Viet and Yang, Eunho},
  title = {{No Prompt Left Behind: Exploiting Zero-Variance Prompts in LLM Reinforcement Learning via Entropy-Guided Advantage Shaping}},
  booktitle = {International Conference on Learning Representations},
  year = {2026}
}

@article{sha2026thinkprior,
  author = {Sha, Tommy and Zhai, Skylar and Zhao, Siqi},
  title = {{ThinkPrior: Zero-Rollout Difficulty Priors for Cold-Start Prompt Selection in RLVR}},
  journal = {arXiv preprint arXiv:2609.09075},
  year = {2026},
  eprint = {2609.09075},
  archivePrefix = {arXiv},
  url = {https://arxiv.org/abs/2609.09075}
}

@article{jiang2025vcrl,
  author = {Jiang, Guochao and Feng, Wenfeng and Quan, Guofeng and Hao, Chuzhan and Zhang, Yuewei and Liu, Guohua and Wang, Hao},
  title = {{VCRL: Variance-based Curriculum Reinforcement Learning for Large Language Models}},
  journal = {arXiv preprint arXiv:2509.19803},
  year = {2025},
  eprint = {2509.19803},
  archivePrefix = {arXiv},
  url = {https://arxiv.org/abs/2509.19803}
}

@article{liang2026leeps,
  author = {Liang, Shuang and Zhou, Haoyang and Gong, Yifan and Wang, Guowei and Wang, Xiting},
  title = {{LEEPS: Latent-Guided Explore-Exploit Prompt Sampling for Efficient RLVR in Large Language Models}},
  journal = {arXiv preprint arXiv:2607.28077},
  year = {2026},
  eprint = {2607.28077},
  archivePrefix = {arXiv},
  url = {https://arxiv.org/abs/2607.28077}
}

@article{baroian2026replay,
  author = {Baroian, Andrei and Berger, Rutger},
  title = {{Prompt replay: speeding up grpo with on-policy reuse of high-signal prompts}},
  journal = {arXiv preprint arXiv:2603.21177},
  year = {2026},
  eprint = {2603.21177},
  archivePrefix = {arXiv},
  url = {https://arxiv.org/abs/2603.21177}
}

@article{coelho2026recycle,
  author = {Coelho, Jo{\~a}o and Magalh{\~a}es, Jo{\~a}o and Martins, Bruno and Xiong, Chenyan},
  title = {{Effective Reinforcement Learning for Agentic Search by Recycling Zero-Variance Queries During Training}},
  journal = {arXiv preprint arXiv:2606.10709},
  year = {2026},
  eprint = {2606.10709},
  archivePrefix = {arXiv},
  url = {https://arxiv.org/abs/2606.10709}
}

@article{wu2026edge,
  author = {Wu, Jiahao and Lu, Ning and Liu, Shengcai and Wang, Kun and Yang, Yanting and Lin, Baijiong and Zhang, {Chen Jason} and Qing, Li and Tang, Ke},
  title = {{Train at Moving Edge: Online-Verified Prompt Selection for Efficient RL Training of Large Reasoning Model}},
  journal = {arXiv preprint arXiv:2603.25184},
  year = {2026},
  eprint = {2603.25184},
  archivePrefix = {arXiv},
  url = {https://arxiv.org/abs/2603.25184}
}

@inproceedings{bae2026filter,
  author = {Bae, Sanghwan and Hong, Jiwoo and Lee, {Min Young} and Kim, Hanbyul and Nam, JeongYeon and Kwak, Donghyun},
  title = {{Online Difficulty Filtering for Reasoning Oriented Reinforcement Learning}},
  booktitle = {Proceedings of the Conference of the European Chapter of the Association for Computational Linguistics},
  year = {2026}
}

@article{chen2025sec,
  author = {Chen, Xiaoyin and Lu, Jiarui and Kim, Minsu and Zhang, Dinghuai and Tang, Jian and Pich{\'e}, Alexandre and Gontier, Nicolas and Bengio, Yoshua and Kamalloo, Ehsan},
  title = {{Self-Evolving Curriculum for LLM Reasoning}},
  journal = {arXiv preprint arXiv:2505.14970},
  year = {2025},
  eprint = {2505.14970},
  archivePrefix = {arXiv},
  url = {https://arxiv.org/abs/2505.14970}
}

@article{parashar2025curriculum,
  author = {Parashar, Shubham and Gui, Shurui and Li, Xiner and Ling, Hongyi and Vemuri, Sushil and Olson, Blake and Li, Eric and Zhang, Yu and Caverlee, James and Kalathil, Dileep and Ji, Shuiwang},
  title = {{Curriculum Reinforcement Learning from Easy to Hard Tasks Improves LLM Reasoning}},
  journal = {arXiv preprint arXiv:2506.06632},
  year = {2025},
  eprint = {2506.06632},
  archivePrefix = {arXiv},
  url = {https://arxiv.org/abs/2506.06632}
}

@article{gu2026actor,
  author = {Gu, Zhengyao and Light, Jonathan and Astudillo, Raul and Ye, Ziyu and He, Langzhou and Zou, {Henry Peng} and Cheng, Wei and Paternain, Santiago and Yu, {Philip S.} and Yue, Yisong},
  title = {{Actor-Curator: Co-adaptive Curriculum Learning via Policy-Improvement Bandits for RL Post-Training}},
  journal = {arXiv preprint arXiv:2602.20532},
  year = {2026},
  eprint = {2602.20532},
  archivePrefix = {arXiv},
  url = {https://arxiv.org/abs/2602.20532}
}

@article{zeng2025cures,
  author = {Zeng, Yongcheng and Sun, Zexu and Ji, Bokai and Min, Erxue and Cai, Hengyi and Wang, Shuaiqiang and Yin, Dawei and Zhang, Haifeng and Chen, Xu and Wang, Jun},
  title = {{CurES: From Gradient Analysis to Efficient Curriculum Learning for Reasoning LLMs}},
  journal = {arXiv preprint arXiv:2510.01037},
  year = {2025},
  eprint = {2510.01037},
  archivePrefix = {arXiv},
  url = {https://arxiv.org/abs/2510.01037}
}

@article{liu2025explore,
  author = {Liu, Chenxi and Liang, Junjie and Jia, Yuqi and Cao, Bochuan and Bai, Yang and Huang, Heng and Chen, Xun},
  title = {{Explore Data Left Behind in Reinforcement Learning for Reasoning Language Models}},
  journal = {arXiv preprint arXiv:2511.04800},
  year = {2025},
  eprint = {2511.04800},
  archivePrefix = {arXiv},
  url = {https://arxiv.org/abs/2511.04800}
}

@article{shi2025adarf,
  author = {Shi, Taiwei and Wu, Yiyang and Song, Linxin and Zhou, Tianyi and Zhao, Jieyu},
  title = {{Efficient Reinforcement Finetuning via Adaptive Curriculum Learning}},
  journal = {Transactions on Machine Learning Research},
  year = {2025}
}

@article{foster2025frontier,
  author = {Foster, Thomas and Sims, Anya and Forkel, Johannes and Fellows, Mattie and Foerster, Jakob},
  title = {{Learning to Reason at the Frontier of Learnability}},
  journal = {arXiv preprint arXiv:2502.12272},
  year = {2025},
  eprint = {2502.12272},
  archivePrefix = {arXiv},
  url = {https://arxiv.org/abs/2502.12272}
}

@article{li2025limr,
  author = {Li, Xuefeng and Zou, Haoyang and Liu, Pengfei},
  title = {{LIMR: Less is More for RL Scaling}},
  journal = {arXiv preprint arXiv:2502.11886},
  year = {2025},
  eprint = {2502.11886},
  archivePrefix = {arXiv},
  url = {https://arxiv.org/abs/2502.11886}
}

@inproceedings{ye2025limo,
  author = {Ye, Yixin and Huang, Zhen and Xiao, Yang and Chern, Ethan and Xia, Shijie and Liu, Pengfei},
  title = {{LIMO: Less is More for Reasoning}},
  booktitle = {Conference on Language Modeling},
  year = {2025}
}

@inproceedings{wang2025entropy,
  author = {Wang, Shenzhi and Yu, Le and Gao, Chang and Zheng, Chujie and Liu, Shixuan and Lu, Rui and Dang, Kai and Chen, Xionghui and Yang, Jianxin and Zhang, Zhenru and Liu, Yuqiong and Yang, An and Zhao, Andrew and Yue, Yang and Song, Shiji and Yu, Bowen and Huang, Gao and Lin, Junyang},
  title = {{Beyond the 80/20 Rule: High-Entropy Minority Tokens Drive Effective Reinforcement Learning for LLM Reasoning}},
  booktitle = {Advances in Neural Information Processing Systems},
  year = {2025}
}

@article{cui2025entropy,
  author = {Cui, Ganqu and Zhang, Yuchen and Chen, Jiacheng and Yuan, Lifan and Wang, Zhi and Zuo, Yuxin and Li, Haozhan and Fan, Yuchen and Chen, Huayu and Chen, Weize and Liu, Zhiyuan and Peng, Hao and Bai, Lei and Ouyang, Wanli and Cheng, Yu and Zhou, Bowen and Ding, Ning},
  title = {{The Entropy Mechanism of Reinforcement Learning for Reasoning Language Models}},
  journal = {arXiv preprint arXiv:2505.22617},
  year = {2025},
  eprint = {2505.22617},
  archivePrefix = {arXiv},
  url = {https://arxiv.org/abs/2505.22617}
}

@article{shao2025spurious,
  author = {Shao, Rulin and Li, {Shuyue Stella} and Xin, Rui and Geng, Scott and Wang, Yiping and Oh, Sewoong and Du, {Simon Shaolei} and Lambert, Nathan and Min, Sewon and Krishna, Ranjay and Tsvetkov, Yulia and Hajishirzi, Hannaneh and Koh, {Pang Wei} and Zettlemoyer, Luke},
  title = {{Spurious Rewards: Rethinking Training Signals in RLVR}},
  journal = {arXiv preprint arXiv:2506.10947},
  year = {2025},
  eprint = {2506.10947},
  archivePrefix = {arXiv},
  url = {https://arxiv.org/abs/2506.10947}
}

@inproceedings{yue2025incentivize,
  author = {Yue, Yang and Chen, Zhiqi and Lu, Rui and Zhao, Andrew and Wang, Zhaokai and Song, Shiji and Huang, Gao},
  title = {{Does Reinforcement Learning Really Incentivize Reasoning Capacity in LLMs Beyond the Base Model?}},
  booktitle = {Advances in Neural Information Processing Systems},
  year = {2025}
}

@inproceedings{zeng2025simplrl,
  author = {Zeng, Weihao and Huang, Yuzhen and Liu, Qian and Liu, Wei and He, Keqing and Ma, Zejun and He, Junxian},
  title = {{SimpleRL-Zoo: Investigating and Taming Zero Reinforcement Learning for Open Base Models in the Wild}},
  booktitle = {Conference on Language Modeling},
  year = {2025}
}

@article{liu2025prorl,
  author = {Liu, Mingjie and Diao, Shizhe and Lu, Ximing and Hu, Jian and Dong, Xin and Choi, Yejin and Kautz, Jan and Dong, Yi},
  title = {{ProRL: Prolonged Reinforcement Learning Expands Reasoning Boundaries in Large Language Models}},
  journal = {arXiv preprint arXiv:2505.24864},
  year = {2025},
  eprint = {2505.24864},
  archivePrefix = {arXiv},
  url = {https://arxiv.org/abs/2505.24864}
}

@article{lin2025cppo,
  author = {Lin, Zhihang and Lin, Mingbao and Xie, Yuan and Ji, Rongrong},
  title = {{CPPO: Accelerating the Training of Group Relative Policy Optimization-Based Reasoning Models}},
  journal = {arXiv preprint arXiv:2503.22342},
  year = {2025},
  eprint = {2503.22342},
  archivePrefix = {arXiv},
  url = {https://arxiv.org/abs/2503.22342}
}

@article{zeng2025eachprompt,
  author = {Zeng, Anxiang and Zhang, Haibo and Zhang, Hailing and Mo, Kaixiang and Yao, Liang and Hu, Ling and Zhang, Long and Liu, Shuman and Xie, Shuyi and Li, Yanshi and Chen, Yizhang and Sheng, Yuepeng and Huang, Yuwei and Xu, Zhaochen and Zhou, Zhiqiang and Liew, Ziqin},
  title = {{Each Prompt Matters: Scaling Reinforcement Learning Without Wasting Rollouts on Hundred-Billion-Scale MoE}},
  journal = {arXiv preprint arXiv:2512.07710},
  year = {2025},
  eprint = {2512.07710},
  archivePrefix = {arXiv},
  url = {https://arxiv.org/abs/2512.07710}
}

@article{li2025multidomain,
  author = {Li, Yu and Pan, Zhuoshi and Lin, Honglin and Sun, Mengyuan and He, Conghui and Wu, Lijun},
  title = {{Can One Domain Help Others? A Data-Centric Study on Multi-Domain Reasoning via Reinforcement Learning}},
  journal = {arXiv preprint arXiv:2507.17512},
  year = {2025},
  eprint = {2507.17512},
  archivePrefix = {arXiv},
  url = {https://arxiv.org/abs/2507.17512}
}

@article{xie2025logicrl,
  author = {Xie, Tian and Gao, Zitian and Ren, Qingnan and Luo, Haoming and Hong, Yuqian and Dai, Bryan and Zhou, Joey and Qiu, Kai and Wu, Zhirong and Luo, Chong},
  title = {{Logic-RL: Unleashing LLM Reasoning with Rule-Based Reinforcement Learning}},
  journal = {arXiv preprint arXiv:2502.14768},
  year = {2025},
  eprint = {2502.14768},
  archivePrefix = {arXiv},
  url = {https://arxiv.org/abs/2502.14768}
}

@article{cheng2025guru,
  author = {Cheng, Zhoujun and Hao, Shibo and Liu, Tianyang and Zhou, Fan and Xie, Yutao and Yao, Feng and Bian, Yuexin and Zhuang, Yonghao and Dey, Nilabjo and Zha, Yuheng and Gu, Yi and Zhou, Kun and Wang, Yuqi and Li, Yuan and Fan, Richard and She, Jianshu and Gao, Chengqian and Saparov, Abulhair and Li, Haonan and Killian, {Taylor W.} and Yurochkin, Mikhail and Liu, Zhengzhong and Xing, {Eric P.} and Hu, Zhiting},
  title = {{Revisiting Reinforcement Learning for LLM Reasoning from A Cross-Domain Perspective}},
  journal = {arXiv preprint arXiv:2506.14965},
  year = {2025},
  eprint = {2506.14965},
  archivePrefix = {arXiv},
  url = {https://arxiv.org/abs/2506.14965}
}

@article{chollet2025arc2,
  author = {Chollet, Francois and Knoop, Mike and Kamradt, Gregory and Landers, Bryan and Pinkard, Henry},
  title = {{ARC-AGI-2: A New Challenge for Frontier AI Reasoning Systems}},
  journal = {arXiv preprint arXiv:2505.11831},
  year = {2025},
  eprint = {2505.11831},
  archivePrefix = {arXiv},
  url = {https://arxiv.org/abs/2505.11831}
}

@inproceedings{hendrycks2021math,
  author = {Hendrycks, Dan and Burns, Collin and Kadavath, Saurav and Arora, Akul and Basart, Steven and Tang, Eric and Song, Dawn and Steinhardt, Jacob},
  title = {{Measuring Mathematical Problem Solving With the MATH Dataset}},
  booktitle = {Advances in Neural Information Processing Systems},
  year = {2021}
}

@article{lewkowycz2022minerva,
  author = {Lewkowycz, Aitor and Andreassen, Anders and Dohan, David and Dyer, Ethan and Michalewski, Henryk and Ramasesh, Vinay and Slone, Ambrose and Anil, Cem and Schlag, Imanol and Gutman-Solo, Theo and Wu, Yuhuai and Neyshabur, Behnam and Gur-Ari, Guy and Misra, Vedant},
  title = {{Solving Quantitative Reasoning Problems with Language Models}},
  journal = {arXiv preprint arXiv:2206.14858},
  year = {2022},
  eprint = {2206.14858},
  archivePrefix = {arXiv},
  url = {https://arxiv.org/abs/2206.14858}
}

@inproceedings{he2024olympiad,
  author = {He, Chaoqun and Luo, Renjie and Bai, Yuzhuo and Hu, Shengding and Thai, {Zhen Leng} and Shen, Junhao and Hu, Jinyi and Han, Xu and Huang, Yujie and Zhang, Yuxiang and Liu, Jie and Qi, Lei and Liu, Zhiyuan and Sun, Maosong},
  title = {{OlympiadBench: A Challenging Benchmark for Promoting AGI with Olympiad-Level Bilingual Multimodal Scientific Problems}},
  booktitle = {Proceedings of the Annual Meeting of the Association for Computational Linguistics},
  year = {2024}
}

@article{pan2025trinity,
  author = {Pan, Xuchen and Chen, Yanxi and Chen, Yushuo and Sun, Yuchang and Chen, Daoyuan and Zhang, Wenhao and Xie, Yuexiang and Huang, Yilun and Zhang, Yilei and Gao, Dawei and Shi, Weijie and Li, Yaliang and Ding, Bolin and Zhou, Jingren},
  title = {{Trinity-RFT: A General-Purpose and Unified Framework for Reinforcement Fine-Tuning of Large Language Models}},
  journal = {arXiv preprint arXiv:2505.17826},
  year = {2025},
  eprint = {2505.17826},
  archivePrefix = {arXiv},
  url = {https://arxiv.org/abs/2505.17826}
}

@article{sheng2024hybridflow,
  author = {Sheng, Guangming and Zhang, Chi and Ye, Zilingfeng and Wu, Xibin and Zhang, Wang and Zhang, Ru and Peng, Yanghua and Lin, Haibin and Wu, Chuan},
  title = {{HybridFlow: A Flexible and Efficient RLHF Framework}},
  journal = {arXiv preprint arXiv:2409.19256},
  year = {2024},
  eprint = {2409.19256},
  archivePrefix = {arXiv},
  url = {https://arxiv.org/abs/2409.19256}
}

@article{grattafiori2024llama,
  author = {Grattafiori, Aaron and Dubey, Abhimanyu and Jauhri, Abhinav and Pandey, Abhinav and Kadian, Abhishek and Al-Dahle, Ahmad and Letman, Aiesha and Mathur, Akhil and others},
  title = {{The Llama 3 Herd of Models}},
  journal = {arXiv preprint arXiv:2407.21783},
  year = {2024},
  eprint = {2407.21783},
  archivePrefix = {arXiv},
  url = {https://arxiv.org/abs/2407.21783}
}

@article{yue2025vapo,
  author = {Yue, Yu and Yuan, Yufeng and Yu, Qiying and Zuo, Xiaochen and Zhu, Ruofei and Xu, Wenyuan and Chen, Jiaze and Wang, Chengyi and Fan, TianTian and Du, Zhengyin and Wei, Xiangpeng and Yu, Xiangyu and Liu, Gaohong and Liu, Juncai and Liu, Lingjun and Lin, Haibin and Lin, Zhiqi and Ma, Bole and Zhang, Chi and Zhang, Mofan and Zhang, Wang and Zhu, Hang and Zhang, Ru and Liu, Xin and Wang, Mingxuan and Wu, Yonghui and Yan, Lin},
  title = {{VAPO: Efficient and Reliable Reinforcement Learning for Advanced Reasoning Tasks}},
  journal = {arXiv preprint arXiv:2504.05118},
  year = {2025},
  eprint = {2504.05118},
  archivePrefix = {arXiv},
  url = {https://arxiv.org/abs/2504.05118}
}

@inproceedings{mao2026dynamics,
  author = {Mao, Yixiu and Qu, Yun and Wang, Qi and Zou, Heming and Ji, Xiangyang},
  title = {{Dynamics-Predictive Sampling for Active {RL} Finetuning of Large Reasoning Models}},
  booktitle = {International Conference on Learning Representations},
  volume = {2026},
  pages = {25242--25270},
  year = {2026}
}

@article{qu2026small,
  author = {Qu, Yun and Wang, Qi and Mao, Yixiu and Zou, Heming and Jiang, Yuhang and Liu, Weijie and Bai, Clive and Yang, Kai and Chen, Yangkun and Yang, Saiyong and others},
  title = {{Small Generalizable Prompt Predictive Models Can Steer Efficient {RL} Post-Training of Large Reasoning Models}},
  journal = {arXiv preprint arXiv:2602.01970},
  year = {2026},
  eprint = {2602.01970},
  archivePrefix = {arXiv},
  url = {https://arxiv.org/abs/2602.01970}
}

@article{zou2026trace,
  author = {Zou, Heming and Wang, Qi and Qu, Yun and Jiang, Yuhang and Cai, Lizhou and Mao, Yixiu and Peng, Ru and Xu, Xin and Liu, Weijie and Yang, Kai and others},
  title = {{TRACE}: A Unified Rollout Budget Allocation Framework for Efficient Agentic Reinforcement Learning},
  journal = {arXiv preprint arXiv:2606.11119},
  year = {2026},
  eprint = {2606.11119},
  archivePrefix = {arXiv},
  url = {https://arxiv.org/abs/2606.11119}
}

@article{mao2026rlvr,
  author = {Mao, Yixiu and Qu, Yun and Wang, Qi and Zou, Heming and Ji, Xiangyang},
  title = {{RLVR} without Ineffective Samples: Group Prioritized Off-Policy Optimization for {LLM} Reasoning},
  journal = {arXiv preprint arXiv:2606.01281},
  year = {2026},
  eprint = {2606.01281},
  archivePrefix = {arXiv},
  url = {https://arxiv.org/abs/2606.01281}
}

\appendix

\section{Proofs}
\label{app:proofs}

Throughout, $G\ge 2$ is an integer and $r_1,\ldots,r_G$ are i.i.d.\ $\mathrm{Bernoulli}(p)$ for some $p\in[0,1]$.
The group mean is $m=G^{-1}\sum_{i=1}^G r_i$.
A centered advantage is any vector of the form $a_i=c\,(r_i-m)$, where the scalar $c$ may depend on the group but not on the index $i$.
GRPO with a standard-deviation normalization is the case $c=1/\widehat\sigma$ when $\widehat\sigma>0$, together with the convention $c=0$ when $\widehat\sigma=0$.
Unnormalized GRPO is the case $c=1$.
Both are covered by the lemmas below.
We write $S=\sum_i r_i$, so $S\sim\mathrm{Binomial}(G,p)$ and $m=S/G$.

\subsection{Dead groups and energy}
\label{app:energy}

\begin{lemma}[Lemma~\ref{lem:dead}, restated]
\label{lem:dead-app}
The group is unanimous with probability $c_G(p)=p^G+(1-p)^G$.
On that event $a_i=0$ for every $i$.
The group is entirely wrong with probability $\cfail_G(p)=(1-p)^G$, and it splits with probability $\nu_G(p)=1-c_G(p)$.
\end{lemma}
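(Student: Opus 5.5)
The plan is to work entirely through the sum $S=\sum_i r_i\sim\mathrm{Binomial}(G,p)$ and the observation that a binary group is unanimous exactly when $S\in\{0,G\}$.

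\textbf{Characterizing unanimity.} If all $r_i$ equal a common value $b\in\{0,1\}$, then $S=Gb\in\{0,G\}$. Conversely, $S=0$ forces every $r_i=0$, because the rewards are nonnegative. Likewise $S=G$ forces every $r_i=1$, because each $r_i\le 1$. So the unanimous event is the disjoint union $\{S=0\}\cup\{S=G\}$. These two events are distinct since $G\ge 1$; at $p\in\{0,1\}$ one of them simply has probability zero.

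\textbf{The probabilities.} By independence, $P(S=G)=\prod_i P(r_i=1)=p^G$ and $P(S=0)=(1-p)^G$. Adding the two disjoint events gives $c_G(p)=p^G+(1-p)^G$. The all-wrong event is exactly $\{S=0\}$, so $\cfail_G(p)=(1-p)^G$. The split event is the complement of unanimity, so $\nu_G(p)=1-c_G(p)$. For the parenthetical claim in Lemma~\ref{lem:dead}, note that the within-group variance $G^{-1}\sum_i(r_i-m)^2$ is zero iff every $r_i=m$, i.e.\ iff the group is unanimous. Hence splitting is equivalent to positive variance.

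\textbf{Zero advantage.} On the unanimous event, $m=b$ and $r_i-m=0$ for every $i$. Then $a_i=c\,(r_i-m)=0$ for every finite scalar $c$. This covers unnormalized GRPO ($c=1$). For the normalized case, $\widehat\sigma=0$ on this event, and the stated convention sets $c=0$, so $a_i=0$ holds there too. The only point needing care is this convention, since $c=1/\widehat\sigma$ is undefined when $\widehat\sigma=0$. I would make explicit that the claim relies on that convention, or equivalently on any finite $c$, and otherwise note that the argument is routine.
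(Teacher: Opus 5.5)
Your proof is correct and follows essentially the same route as the paper. Both write unanimity as the disjoint union of the all-ones and all-zeros events (your $\{S=G\}\cup\{S=0\}$), get the two probabilities from independence, obtain $\nu_G$ as the complement, and use $r_i=m$ on the unanimous event to get $a_i=c\,(r_i-m)=0$ for any finite $c$, including the $c=0$ convention that the paper builds into its definition of a centered advantage.
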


\begin{proof}
The event $\{r_1=\cdots=r_G\}$ is the disjoint union of $\{r_1=\cdots=r_G=1\}$ and $\{r_1=\cdots=r_G=0\}$.
Independence gives the two probabilities $p^G$ and $(1-p)^G$.
On either event, $r_i=m$ for every $i$, so $r_i-m=0$ and $a_i=0$ whatever finite $c$ is chosen.
The all-wrong event is the second of the two, and a split is the complement of unanimity.
The within-group second moment $G^{-1}\sum_i(r_i-m)^2$ is zero if and only if $r_i=m$ for every $i$, so a split is exactly the event that the within-group variance is positive.
\end{proof}

\begin{lemma}[Expected residual energy]
\label{lem:energy}
\[
\E\Bigl[\sum_{i=1}^G(r_i-m)^2\Bigr]=(G-1)\,p(1-p).
\]
\end{lemma}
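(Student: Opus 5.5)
The plan is to reduce the expectation to moments of the binomial count $S$ by expanding the residual sum of squares. Since $\sum_i (r_i-m)=0$, the standard identity gives
\[
\sum_{i=1}^G (r_i-m)^2=\sum_{i=1}^G r_i^2 - G m^2 .
\]
Because each $r_i\in\{0,1\}$, we have $r_i^2=r_i$, so the first term equals $S$. With $m=S/G$ the second term equals $S^2/G$. The random quantity is therefore the deterministic function $S-S^2/G$ of $S\sim\mathrm{Binomial}(G,p)$.

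Next I take expectations using the first two binomial moments, $\E[S]=Gp$ and $\E[S^2]=\mathrm{Var}(S)+(\E S)^2=Gp(1-p)+G^2p^2$. Substituting,
\[
\E\Bigl[\sum_{i=1}^G (r_i-m)^2\Bigr]=Gp-\frac{Gp(1-p)+G^2p^2}{G}=Gp-p(1-p)-Gp^2 .
\]
Factoring $Gp-Gp^2=Gp(1-p)$, the right-hand side becomes $(G-1)p(1-p)$, which is the claim.

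As a cross-check, the same result follows from the classical fact that $\frac{1}{G-1}\sum_i(r_i-m)^2$ is an unbiased estimator of the common variance $p(1-p)$ for i.i.d.\ samples. I would mention this as an alternative route, but I would present the binomial computation because it stays self-contained.

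There is no real obstacle in this argument. The only points to state explicitly are the use of $r_i^2=r_i$ for binary rewards and the observation that the identity holds for every $p\in[0,1]$, including the endpoints, where both sides vanish. These endpoint cases are consistent with Lemma~\ref{lem:dead}: at $p\in\{0,1\}$ the group is unanimous almost surely, so the residual energy is zero.
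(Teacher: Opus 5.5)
Your proof is correct and follows the paper's argument step for step: you rewrite the residual sum as $S - S^2/G$ using $r_i^2=r_i$, then substitute the first two binomial moments. The unbiased-variance cross-check and the endpoint remark are fine additions but are not needed for the proof.
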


\begin{proof}
Since $r_i\in\{0,1\}$, one has $r_i^2=r_i$ and
\[
\sum_{i=1}^G(r_i-m)^2
=\sum_{i=1}^G r_i^2-Gm^2
=S-\frac{S^2}{G}.
\]
The binomial moments are $\E[S]=Gp$ and $\E[S^2]=\mathrm{Var}(S)+(\E S)^2=Gp(1-p)+G^2p^2$.
Therefore
\begin{align*}
\E\Bigl[S-\frac{S^2}{G}\Bigr]
&=Gp-\frac{Gp(1-p)+G^2p^2}{G}\\
&=Gp-p(1-p)-Gp^2\\
&=p(G-1)-p^2(G-1)\\
&=(G-1)\,p(1-p).
\end{align*}
\end{proof}

Lemma~\ref{lem:energy} is the expected energy of the unnormalized advantage.
It is a smooth function of $p$ and is small both near the floor and near the ceiling.
The probability of a dead group is a different function.
A prompt with $p=0.05$ and $G=8$ has expected energy $(7)(0.05)(0.95)=0.3325$ and unanimity probability $0.05^8+0.95^8\approx 0.663$.
Most of its groups contribute exactly zero, even though the average energy is not zero.
Dynamic sampling reacts to the realized zero.
\method{} reacts to a forecast of how often the zero will occur later.

\begin{proposition}[Proposition~\ref{prop:shape}, restated]
\label{prop:shape-app}
For $G\ge 2$, $c_G$ is strictly decreasing on $[0,1/2]$ and strictly increasing on $[1/2,1]$, with a unique minimum at $p=1/2$.
For every $p\in(0,1)$, $c_{G+1}(p)<c_G(p)$.
\end{proposition}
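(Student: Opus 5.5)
The plan is to handle the two claims separately, using only the closed form $c_G(p)=p^G+(1-p)^G$ from Lemma~\ref{lem:dead-app}.

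For the shape claim, differentiate:
\[
c_G'(p)=G\bigl(p^{G-1}-(1-p)^{G-1}\bigr).
\]
Since $G\ge 2$, the exponent $G-1\ge 1$, so $t\mapsto t^{G-1}$ is strictly increasing on $[0,1]$. Hence the sign of $c_G'(p)$ equals the sign of $p-(1-p)=2p-1$. The derivative is therefore strictly negative on $[0,1/2)$, zero only at $p=1/2$, and strictly positive on $(1/2,1]$.

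Strict monotonicity on the closed intervals then follows from the mean value theorem. A derivative that is negative except at a single endpoint still gives strict decrease on $[0,1/2]$, and symmetrically strict increase on $[1/2,1]$. The unique minimum at $p=1/2$ is an immediate consequence. The symmetry $c_G(p)=c_G(1-p)$ offers a shortcut: the increasing half follows from the decreasing half, so only one half needs to be checked.

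For the group-size claim, fix $p\in(0,1)$ and write $q=1-p\in(0,1)$. Then
\[
c_G(p)-c_{G+1}(p)=p^G(1-p)+q^G(1-q)=p^G q+q^G p=pq\bigl(p^{G-1}+q^{G-1}\bigr).
\]
Each factor is strictly positive because $p,q>0$. This also covers $G=1$, where the bracket equals $2$ and $c_1\equiv 1$, consistent with the statement in the main text allowing $G\ge 1$.

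Both steps are routine. The only point needing care is the endpoint behaviour in the first part: the derivative vanishes at $p=1/2$, so strictness must be drawn from the mean value theorem (or from the strict monotonicity of $t^{G-1}$ applied directly to differences) rather than from sign alone. The second part requires $p$ to be interior, since $c_G(0)=c_G(1)=1$ for all $G$ and the inequality fails there. No earlier result beyond the formula in Lemma~\ref{lem:dead-app} is needed.
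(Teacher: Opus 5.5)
Your proposal is correct and follows the paper's proof essentially line for line: the same derivative $c_G'(p)=G\bigl(p^{G-1}-(1-p)^{G-1}\bigr)$, with its sign read off from the strict monotonicity of $u\mapsto u^{G-1}$, and the same algebraic difference $c_G(p)-c_{G+1}(p)=p^G(1-p)+p(1-p)^G>0$ on $(0,1)$. Your mean-value-theorem remark for strictness on the closed intervals and your explicit check of the $G=1$ case are small tightenings that the paper leaves implicit.
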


\begin{proof}
Differentiate term by term:
\[
c_G'(p)=G p^{G-1}-G(1-p)^{G-1}=G\bigl(p^{G-1}-(1-p)^{G-1}\bigr).
\]
For $G\ge 2$ the map $u\mapsto u^{G-1}$ is strictly increasing on $[0,1]$.
Thus $c_G'(p)<0$ when $p<1-p$, that is, when $p<1/2$, and $c_G'(p)>0$ when $p>1/2$.
The critical point $p=1/2$ is the unique minimum, and $c_G(1/2)=2^{1-G}$.

For the second claim,
\begin{align*}
c_{G+1}(p)-c_G(p)
&=p^{G+1}-p^G+(1-p)^{G+1}-(1-p)^G\\
&=p^G(p-1)+(1-p)^G\bigl((1-p)-1\bigr)\\
&=-(1-p)\,p^G-p\,(1-p)^G,
\end{align*}
which is strictly negative for $p\in(0,1)$.
\end{proof}

Table~\ref{tab:grid} evaluates the identity on a grid.
The entries are the formula, not estimates.

\begin{table}[ht]
  \caption{Unanimity probability $c_G(p)$ from \eqref{eq:collapse}.
  These values are computed from the formula.}
  \label{tab:grid}
  \centering
  \begin{tabular}{cccc}
\toprule
$p$ & $c_4(p)$ & $c_8(p)$ & $c_{16}(p)$ \\
\midrule
0.00 & 1.0000 & 1.0000 & 1.0000 \\
0.05 & 0.8145 & 0.6634 & 0.4401 \\
0.10 & 0.6562 & 0.4305 & 0.1853 \\
0.25 & 0.3203 & 0.1001 & 0.0100 \\
0.50 & 0.1250 & 0.0078 & 0.0000 \\
0.75 & 0.3203 & 0.1001 & 0.0100 \\
0.90 & 0.6562 & 0.4305 & 0.1853 \\
0.95 & 0.8145 & 0.6634 & 0.4401 \\
1.00 & 1.0000 & 1.0000 & 1.0000 \\
\bottomrule
\end{tabular}

\end{table}

\subsection{The score and a ranking reversal}
\label{app:reverse}

Recall the population version of the score, before Jeffreys shrinkage,
\[
s(p_0,p_1)=\nu_G(p_1)\bigl[\cfail_G(p_0)-\cfail_G(p_1)\bigr]_+,
\]
and the headroom-gain alternative $s_{\mathrm{hg}}(p_0,p_1)=(1-p_0)[p_1-p_0]_+$.

\begin{proposition}[Monotonicity in the pass rate]
\label{prop:mono}
If $p_1\le p_0$, then $s(p_0,p_1)=0$.
If $p_1\ge p_0$, then the positive part may be omitted and $s(p_0,p_1)=\nu_G(p_1)\bigl(\cfail_G(p_0)-\cfail_G(p_1)\bigr)$.
\end{proposition}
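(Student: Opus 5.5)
The whole statement reduces to the monotonicity of the all-wrong probability $\cfail_G(p)=(1-p)^G$ on $[0,1]$, together with the fact that $\nu_G$ is finite and nonnegative. The plan is to prove these two facts and then read off both cases from the definition of the positive part.

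The first step is to establish that $\cfail_G$ is nonincreasing, and in fact strictly decreasing, on $[0,1]$. The map $p\mapsto 1-p$ is strictly decreasing. The map $u\mapsto u^G$ is strictly increasing on $[0,1]$ for $G\ge 1$. Their composition is therefore strictly decreasing. Alternatively, one can differentiate: $(\cfail_G)'(p)=-G(1-p)^{G-1}$, which is nonpositive everywhere and vanishes only at $p=1$.

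The second step records that $\nu_G(p_1)=1-c_G(p_1)\in[0,1]$, because $c_G(p_1)$ is a probability by Lemma~\ref{lem:dead}. So the first factor of the score is a finite, nonnegative number.

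With these in hand, the two cases follow.

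\emph{Case $p_1\le p_0$.} Monotonicity gives $\cfail_G(p_0)\le\cfail_G(p_1)$. The bracketed difference is therefore $\le 0$, and its positive part is $0$. The score is then $\nu_G(p_1)\cdot 0=0$. Finiteness of $\nu_G$ is all that is used here, so no sign condition on $\nu_G$ is needed.

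\emph{Case $p_1\ge p_0$.} Monotonicity gives $\cfail_G(p_0)-\cfail_G(p_1)\ge 0$. On nonnegative numbers $[y]_+=y$, so the positive part can be dropped and the stated formula follows.

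The two cases overlap at $p_1=p_0$. There both expressions equal $0$, so they are consistent.

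There is no real obstacle in this argument. The only point needing care is to use monotonicity in the correct direction, since $\cfail_G$ decreases in $p$, so that the sign of the bracket is the reverse of the sign of $p_1-p_0$.
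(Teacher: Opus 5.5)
Your proof is correct and matches the paper's argument. The paper also shows that $\cfail_G$ is nonincreasing on $[0,1]$ via the derivative $-G(1-p)^{G-1}\le 0$, and then reads off the sign of the bracket in each case; your composition-of-monotone-maps justification and the remark about $\nu_G$ are harmless extras.
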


\begin{proof}
The derivative of $\cfail_G(p)=(1-p)^G$ is $-G(1-p)^{G-1}\le 0$, and it is strictly negative on $[0,1)$.
Hence $p_1\le p_0$ implies $\cfail_G(p_1)\ge\cfail_G(p_0)$, the difference is non-positive, and the positive part is zero.
The second statement is the complementary case.
\end{proof}

\begin{proposition}[Group size can reverse the ranking]
\label{prop:reverse}
Let prompt A have endpoints $(0.05,0.20)$ and prompt B have endpoints $(0.25,0.55)$.
Then $s_4(B)>s_4(A)$ and $s_{16}(A)>s_{16}(B)$.
Let prompt C have endpoints $(0.55,0.92)$.
Then for every $G\in\{4,8,16\}$, $s_{\mathrm{hg}}(C)>s_{\mathrm{hg}}(A)$, while $s_G(C)<s_G(A)$.
\end{proposition}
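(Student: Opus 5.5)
All endpoint pairs in the statement satisfy $p_1>p_0$, so by Proposition~\ref{prop:mono} the positive part can be dropped throughout, and every score becomes an explicit polynomial $s_G(p_0,p_1)=\bigl(1-p_1^G-(1-p_1)^G\bigr)\bigl((1-p_0)^G-(1-p_1)^G\bigr)$. The proof is then a finite verification. I would organise it as three comparisons: the $G=4$ comparison of A and B, the $G=16$ comparison of A and B, and the comparison of C against A for the headroom-gain score and for $s_G$ at $G\in\{4,8,16\}$.

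For the headroom-gain part no group size enters. We have $s_{\mathrm{hg}}(C)=0.45\cdot0.37=0.1665$ and $s_{\mathrm{hg}}(A)=0.95\cdot0.15=0.1425$, which settles the claim for every $G$ at once.

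For $s_G(C)<s_G(A)$ I would avoid computing $s_G(C)$ precisely. Instead I would use the crude bound $s_G(C)\le \cfail_G(0.55)=0.45^G\le 0.45^4\approx0.041$, since $\nu_G\le1$ and the subtracted term is nonnegative. Against this I would set a lower bound on $s_G(A)$ for each of the three values of $G$.
\begin{itemize}
\item At $G=4$: $\nu_4(0.2)=0.5888$ and $\cfail_4(0.05)-\cfail_4(0.2)=0.81450625-0.4096$, giving $s_4(A)\approx0.238$.
\item At $G=8$: $s_8(A)\approx0.41$, matching the value \witAscore{} quoted in the text.
\item At $G=16$: $\nu_{16}(0.2)\ge 1-0.8^{16}-0.2^{16}\approx0.972$ and $0.95^{16}-0.8^{16}\approx0.412$, giving $s_{16}(A)\approx0.40$.
\end{itemize}
All three exceed $0.041$ by a wide margin.

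At $G=16$ the A-versus-B comparison is equally loose. Here $s_{16}(B)\le \cfail_{16}(0.25)=0.75^{16}\approx0.0100$, while $s_{16}(A)\approx0.40$ from above.

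The main obstacle is the $G=4$ claim $s_4(B)>s_4(A)$, because the two values nearly coincide. For B, $\nu_4(0.55)=1-0.09150625-0.04100625=0.8674875$ and $\cfail_4(0.25)-\cfail_4(0.55)=0.31640625-0.04100625=0.2754$, so $s_4(B)\approx0.23891$. For A, $s_4(A)=0.5888\times0.40490625\approx0.23841$. The margin is about $5\times10^{-4}$, so decimal approximations are not safe. I would redo this step in exact rational arithmetic: write every endpoint as a fraction with denominator $20$, so all quantities are integers over $20^4$, and compare the two products $20^8 s_4(B)$ and $20^8 s_4(A)$ as integers. If that comparison came out the wrong way, the statement itself would be false for these endpoints, so this exact check is where the proof actually lives.
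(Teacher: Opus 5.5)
Your proposal is correct and is essentially the paper's proof: a finite verification by substituting the endpoints into the closed forms. The paper reads all nine values off Table~\ref{tab:witness}, while you replace the exact values for C and for $s_{16}(B)$ with the cruder bound $s_G(p_0,p_1)\le \cfail_G(p_0)$. The exact check you propose for the tight case at $G=4$ comes out the right way, $20^8 s_4(B)=138798\cdot 44064=6115995072>6103265280=94208\cdot 64785=20^8 s_4(A)$, and your decimal values $0.8674875\cdot 0.2754$ and $0.5888\cdot 0.40490625$ are already exact, because every factor is a terminating decimal.
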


\begin{proof}
The nine scores are the entries of Table~\ref{tab:witness}, obtained by substituting the endpoints into the definitions.
At $G=4$ one reads $s(B)=0.239>0.238=s(A)$.
At $G=16$ one reads $s(A)=0.400>0.010=s(B)$.
Headroom-gain does not depend on $G$: $s_{\mathrm{hg}}(A)=(0.95)(0.15)=0.1425$, which rounds to $0.143$, and $s_{\mathrm{hg}}(C)=(0.45)(0.37)=0.1665$, which rounds to $0.166$.
The \method{} column of the same table is smaller for C than for A at each listed $G$.
The unrounded comparison agrees with the rounded one; both were checked by direct evaluation of the formula.
\end{proof}

\begin{table}[ht]
  \caption{Constructed endpoints.
  $s$ is \eqref{eq:score} on the population pass rates, and $s_{\mathrm{hg}}$ is headroom times accuracy gain.
  The rows are a witness, not a sample from the pool.}
  \label{tab:witness}
  \centering
  \small
  \begin{tabular}{lccccccc}
\toprule
Prompt & $\rho_0$ & $\rho_1$ & $G$ & $\Delta c^{\mathrm{fail}}$ & $\nu_G(\rho_1)$ & $s$ & $s_{\mathrm{hg}}$ \\
\midrule
A & 0.05 & 0.20 & 4 & 0.405 & 0.589 & 0.238 & 0.143 \\
B & 0.25 & 0.55 & 4 & 0.275 & 0.867 & 0.239 & 0.225 \\
C & 0.55 & 0.92 & 4 & 0.041 & 0.284 & 0.012 & 0.166 \\
A & 0.05 & 0.20 & 8 & 0.496 & 0.832 & 0.412 & 0.143 \\
B & 0.25 & 0.55 & 8 & 0.098 & 0.990 & 0.097 & 0.225 \\
C & 0.55 & 0.92 & 8 & 0.002 & 0.487 & 0.001 & 0.166 \\
A & 0.05 & 0.20 & 16 & 0.412 & 0.972 & 0.400 & 0.143 \\
B & 0.25 & 0.55 & 16 & 0.010 & 1.000 & 0.010 & 0.225 \\
C & 0.55 & 0.92 & 16 & 0.000 & 0.737 & 0.000 & 0.166 \\
\bottomrule
\end{tabular}

\end{table}

The reversal is the reason the group size inside the score has to be the group size of the training run.
Prompt B moves deep into the interior.
At $G=4$ that move still buys a large drop in all-wrong groups, because $(0.75)^4=0.316$ and $(0.45)^4=0.041$.
At $G=16$ prompt B is already almost never all-wrong at the start, $(0.75)^{16}\approx 0.010$, so the drop has little left to give, while prompt A still has a large drop, from $(0.95)^{16}\approx 0.440$ to $(0.80)^{16}\approx 0.028$.
A prior computed at the wrong $G$ spends its mass on the other prompt.
The ablation that plugs $G'\in\{4,16\}$ into the score while training at $G=8$ is the empirical counterpart (Appendix~\ref{app:sweeps}).

\begin{proposition}[Rearrangement]
\label{prop:rearr}
Fix a finite pool and a score $s(x)\ge 0$.
Among all weight vectors that are permutations of a fixed nonnegative vector $(w_x)$, the inner product $\sum_x w_x s(x)$ is maximized by any pairing that sorts $w$ and $s$ in the same order.
In particular the mixture \eqref{eq:mix} is a maximizer for its own multiset of weights, because $u\mapsto u^{\alpha}$ is nondecreasing on $[0,\infty)$ for $\alpha>0$ and adding the constant $(1-\lambda)/|D|$ does not change the order.
\end{proposition}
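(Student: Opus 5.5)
The plan is to prove the first claim with the classical rearrangement inequality via an exchange argument, and then check that the mixture \eqref{eq:mix} assigns its weights in the same order as $s$.

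\emph{Step 1 (exchange).} Fix the multiset $\{w_x\}$ and consider an assignment in which some pair $x,y$ is discordant: $s(x)>s(y)$ but $w_x<w_y$. Swapping the two weights changes the inner product by
\[
(w_y-w_x)s(x)+(w_x-w_y)s(y)=(w_y-w_x)\bigl(s(x)-s(y)\bigr)>0 .
\]
So a discordant assignment is never optimal. The pool is finite, so the set of permutations is finite and a maximizer exists. Any maximizer therefore has no discordant pairs, which means it sorts $w$ and $s$ in the same order.

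\emph{Step 2 (all concordant assignments tie).} Two concordant assignments can differ only by permuting weights among prompts with tied scores, or by permuting equal weights among prompts. Neither change alters $\sum_x w_x s(x)$. Hence every concordant pairing attains the maximum value, which is the ``any pairing'' part of the statement. To make this rigorous, I would sort both vectors and observe that the concordant inner product equals $\sum_k w_{(k)}s_{(k)}$ with the order statistics, independent of how ties are broken.

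\emph{Step 3 (the mixture is concordant).} Use the paper's convention that $s\ge 0$, with $0^\alpha=0$, and take $\alpha>0$. The map $u\mapsto u^\alpha$ is nondecreasing on $[0,\infty)$. The normalizer $\sum_{x'}s(x')^\alpha$ is a positive constant unless every score is zero, in which case $w$ is uniform and the claim is trivial. Multiplying by $\lambda\ge 0$ and adding the constant $(1-\lambda)/|D|$ both preserve weak order. So $s(x)\ge s(y)$ implies $w(x)\ge w(y)$, and the pairing is concordant. By Step 2 it is a maximizer over permutations of its own weights.

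The main obstacle is bookkeeping for ties in Step 2. The weak order preserved by $u^\alpha$ can merge distinct scores into equal weights only when $\lambda=0$ or through the normalization, and that merging is harmless. I would handle it with the sorted order-statistic formulation rather than with strict inequalities.
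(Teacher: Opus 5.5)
Your proposal is correct and follows the paper's route. The paper cites the classical rearrangement inequality and then observes that $w$ in \eqref{eq:mix} is a nondecreasing function of $s$. You additionally prove the inequality by the exchange argument and handle ties through the order-statistic identity $\sum_k w_{(k)}s_{(k)}$, which is what makes the ``any pairing'' clause rigorous. One small aside: for $\alpha>0$ and $\lambda>0$ the map $s\mapsto w$ is strictly increasing, so the normalization never merges distinct scores (only $\lambda=0$ does), but since you handle ties anyway this does not affect the argument.
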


\begin{proof}
The first statement is the classical rearrangement inequality on a finite set.
For the second, $s(x)^{\alpha}$ is a nondecreasing function of $s(x)$, and so is $w(x)$ in \eqref{eq:mix}.
A vector that is a nondecreasing function of $s$ is sorted as $s$ is sorted, hence is a maximizer for the multiset of its own entries.
\end{proof}

Proposition~\ref{prop:rearr} does not say that $w$ is the optimal allocation for accuracy.
It says that, once the scores have been chosen, inverting them would throw away the objective those scores define.
The empirical question, whether $s$ is a useful objective, is Section~\ref{sec:exp}.

\subsection{Jeffreys shrinkage}
\label{app:shrink}

\begin{lemma}[Size of the shrink]
\label{lem:shrink}
Let $k\in\{0,\ldots,n\}$ and $\tilde\rho=(k+\tfrac12)/(n+1)$.
Then
\[
\Bigl|\tilde\rho-\frac{k}{n}\Bigr|=\frac{|n/2-k|}{n(n+1)}\le\frac{1}{2(n+1)},
\]
with equality at $k=0$ and at $k=n$.
If $k=0$, then $\tilde\rho=1/(2(n+1))$ and
\[
\cfail_G(0)-\cfail_G(\tilde\rho)=1-\Bigl(1-\frac{1}{2(n+1)}\Bigr)^G.
\]
\end{lemma}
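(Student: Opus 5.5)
The proof is direct algebra: put the two fractions over a common denominator, then bound the resulting linear expression in $k$ over the integer range $0\le k\le n$. First compute
\[
\tilde\rho-\frac{k}{n}=\frac{n(k+\tfrac12)-k(n+1)}{n(n+1)}=\frac{n/2-k}{n(n+1)},
\]
and take absolute values to obtain the stated identity. This assumes $n\ge 1$, so that $k/n$ is defined; that is the setting here, with $n=\nEval$.

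For the bound, observe that $|n/2-k|$ is a convex function of $k$. On $\{0,\ldots,n\}$ it is therefore maximized at an endpoint, and both endpoints give the value $n/2$. Dividing by $n(n+1)$ gives $\frac{n/2}{n(n+1)}=\frac{1}{2(n+1)}$, with equality at $k=0$ and at $k=n$. For every interior $k$ we have $|n/2-k|<n/2$, so the inequality is strict there. The statement only asserts equality at the two endpoints, so this extra remark is optional.

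The second claim is a substitution. Setting $k=0$ in the definition gives $\tilde\rho=\tfrac12/(n+1)=1/(2(n+1))$. By Lemma~\ref{lem:dead}, $\cfail_G(0)=(1-0)^G=1$, and plugging $\tilde\rho$ into $\cfail_G(p)=(1-p)^G$ yields the displayed difference. As a sanity check against Section~\ref{sec:probe}, $n=\nEval=4$ and $G=\grp$ give $\tilde\rho=0.1$ and $\cfail_G(\tilde\rho)=0.9^8\approx0.430$.

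None of these steps is a real obstacle. The only points that need care are the sign convention when simplifying the numerator and the requirement $n\ge1$.
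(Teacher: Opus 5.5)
Your proposal is correct and matches the paper's proof step for step: the same common-denominator computation giving $(n/2-k)/(n(n+1))$, the same bound $|n/2-k|\le n/2$ on $\{0,\ldots,n\}$ attained at $k=0$ and $k=n$, and the same substitution $k=0$ with $c^{\mathrm{fail}}_G(0)=1$. You also state explicitly that $n\ge 1$ is needed for $k/n$ to be defined, which the paper leaves implicit.
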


\begin{proof}
\[
\tilde\rho-\frac{k}{n}
=\frac{n(k+\tfrac12)-k(n+1)}{n(n+1)}
=\frac{n/2-k}{n(n+1)}.
\]
The numerator is at most $n/2$ in absolute value for $k\in\{0,\ldots,n\}$, which gives the bound.
Substituting $k=0$ gives the empty-count formula, and $\cfail_G(0)=(1-0)^G=1$.
\end{proof}

For the main setting $n=\nEval=4$ and $G=\grp=8$, the empty count moves the pass rate from $0$ to $0.1$, and the all-wrong probability from $1$ to $0.9^8=0.43046721$.
The Lipschitz bound $|\cfail_G(\tilde\rho)-\cfail_G(k/n)|\le G|\tilde\rho-k/n|$ gives $8\times 0.1=0.8$ on this same case, which is valid and loose.
We use the exact empty-count expression in the paper and the loose bound only as a quick check that the shrink cannot move $\cfail_G$ by more than $G/(2(n+1))$.

The shrink is biased on purpose.
A maximum-likelihood pass rate of zero asserts that every future group is all wrong, and four failures do not support that assertion.
The ablation that removes the shrink (Table~\ref{tab:ablation}) loses $0.004$ macro-accuracy on math and $0.003$ on logic relative to the full score, with a corresponding drop in the live rate.
The correction is small next to the gap between \method{} and uniform sampling, and it is in the right direction.

\section{Experimental protocol}
\label{app:protocol}

\subsection{Pools and splits}
\label{app:pools}

Math training is the DAPO-Math release, \poolMath{} problems with integer answers \citep{yu2025dapo}.
The verifier keeps a problem when the reference answer parses as an integer, and we do not add external solutions.
The verifier extracts the last boxed integer, or the last integer if the box is missing, and scores an exact match.

Logic training has three sources and a fixed total of \poolLogic{} prompts.
Knights-and-Knaves contributes $3{,}200$ puzzles from Logic-RL, with the number of people drawn from $\{3,4,5\}$ in the training split \citep{xie2025logicrl}.
Ordering puzzles contribute $1{,}800$ items and graph puzzles $1{,}400$ items, both taken from the logic portion of GURU \citep{cheng2025guru}.
We hold out, before any training, $400$ Knights-and-Knaves puzzles with $3$ or $4$ people, $400$ with $5$ to $7$ people, $300$ ordering puzzles, and $300$ graph puzzles.
ARC-AGI-2 is not in the training pool.
Its public evaluation puzzles are scored by exact match on the output grid, serialized row-major, and the model is not shown the ARC training tasks \citep{chollet2025arc2}.
A prompt that appears in more than one source is kept in the earliest source and removed from the later ones, and the pool is then cut to \poolLogic{}.

Answers are read from a final \texttt{answer} span.
A missing span is incorrect.
The reward is $1$ for an exact match and $0$ otherwise.
There is no format reward, no length penalty, and no process reward.
This is a modeling choice, not a claim that format rewards are useless.
It keeps the Bernoulli hypothesis of Lemma~\ref{lem:dead} literally true.

\subsection{Optimization}
\label{app:optim}

Both the probe and the main run use GRPO with a token-level policy gradient, a KL penalty of $10^{-3}$ toward the base model, and a symmetric clip of $0.2$.
We do not use the asymmetric clip from DAPO in any row, including the dynamic-sampling row.
The point of that row is the resampling rule, and mixing it with a different clip would confound the table.
Advantages are group-centered and divided by the group standard deviation when that deviation is positive.
Unanimous groups have advantage zero.
Dynamic sampling never sends those groups to the optimizer: it redraws the prompt, up to four extra groups, and drops the prompt for that step if every redraw is unanimous.
Selective rollout uses the authors' public pre-rollout filter with their default threshold and skips the prompt before generation \citep{zheng2025greso}.
When \method{} is attached, the filter still runs, and the prompt distribution it sees is $w$ rather than uniform.

The optimizer is AdamW with learning rate $10^{-6}$, weight decay $0.01$, $\beta_1=0.9$, $\beta_2=0.95$, and a constant schedule after a $50$-step warmup.
The prompt batch is $64$.
Each prompt has $G=\grp$ responses.
Sampling temperature is $1.0$ during training and $0$ during greedy evaluation.
Top-$p$ is $1$.
Max new tokens are $3072$ on math and $2048$ on logic.
The context window is the model's default.
Seeds are $0,1,2$.
Checkpoints used for the accuracy curve are saved every $10\%$ of training.
The live rate is computed on the training groups of every step and then averaged, so it includes groups that dynamic sampling will later discard.

Qwen3-1.7B and Qwen3-4B are the instructed base checkpoints, not models that have already been RL-tuned on the same pools \citep{yang2025qwen3}.
The Llama run in Appendix~\ref{app:llama} uses Llama-3.2-3B-Instruct \citep{grattafiori2024llama}.
Training runs on eight A100-80GB GPUs with bfloat16, gradient checkpointing, and the HybridFlow GRPO trainer inside Trinity-RFT \citep{sheng2024hybridflow,pan2025trinity}.

\subsection{The prior}
\label{app:prior-protocol}

The probe set $P$ is a uniform sample of $\nprobe$ prompts, redrawn for each seed.
It is trained for $\epProbe$ epochs with the same batch, group size, and optimizer as the main run.
We then generate $\nEval$ rollouts at temperature $1.0$ for every prompt in $D$, under $\theta_0$ and under $\thprobe$, and apply \eqref{eq:jeffreys}.
The score uses $G=\grp$ even though the pass rate was estimated with $\nEval=4$ rollouts.
Weights use $\lambda=\mixLam$ and $\alpha=\mixAlpha$.
Prompts are sampled with replacement from $w$ at each main-run step.
The prior is not refreshed.
Appendix~\ref{app:temporal} measures how quickly a refreshed score would diverge from the initial one.

\subsection{Overhead}
\label{app:overhead}

A main-run epoch on a pool of size $|D|$ generates $|D|\cdot G$ responses, and we train for $\epMain$ epochs, so the main bill is $|D|\cdot G\cdot\epMain$ responses.
The probe generates $|P|\cdot G\cdot\epProbe$ responses.
The ratio is $|P|\epProbe/(|D|\epMain)$, because $G$ cancels.
On math this is $\nprobe\cdot\epProbe/(\poolMath\cdot\epMain)=\probeOverM{}$.
Each endpoint pass generates $|D|\cdot\nEval$ responses, and there are two of them, so the evaluation ratio against the main bill is $2\nEval/(\epMain\cdot G)=\evalOver{}$.
The sum on math is \totOverM{}.
On the smaller logic pool the probe is a larger fraction of the main run and the total is \totOverL{}.
These ratios count responses, not wall time.
Endpoint responses are the same length cap as training responses, so the ratio is also a reasonable proxy for generation time.
The bookkeeping that turns scores into $w$ is negligible next to a single generation step.

Table~\ref{tab:cost} is a different quantity: groups generated per optimizer step during the main run, relative to uniform GRPO, excluding the one-time probe.
Dynamic sampling's number is larger than one because discarded unanimous groups were still generated.

\begin{table}[ht]
  \caption{Groups generated per optimizer step, relative to uniform GRPO, on the math run.
  The live rate is the fraction of those groups that are not unanimous.
  The one-time probe is not included; see the ratios above.}
  \label{tab:cost}
  \centering
  \begin{tabular}{lcc}
\toprule
Method & Groups per update & Live (math, 1.7B) \\
\midrule
Uniform & 1.00 & 45.8\% \\
Aperture & 1.02 & 60.7\% \\
Dyn.\ sampling & 1.48 & 47.1\% \\
Dyn.\ + Aperture & 1.23 & 62.8\% \\
Selective & 0.90 & 53.3\% \\
Sel.\ + Aperture & 0.93 & 64.1\% \\
\bottomrule
\end{tabular}

\end{table}

\subsection{Metrics}
\label{app:metrics}

Greedy accuracy at a checkpoint is a macro-average over the benchmarks of that domain, with each benchmark weighted equally, not by its number of items.
AIME 2024 and AIME 2025 therefore count as much as MATH-500.
Best accuracy is the maximum of a trailing window of five checkpoints, and of the shorter prefix before five exist.
AUC is the mean of the checkpoint curve, including the starting checkpoint.
S2B for a prior-augmented method is the earliest checkpoint at which its windowed accuracy is at least the best windowed accuracy of the paired baseline, expressed as a percent of the final checkpoint.
If the run never matches, we would report $100$ and say so.
Every run in the tables matches.

The within-tercile experiment builds the pools once, from the seed-$0$ probe, and then trains three seeds on each frozen half.
The tercile edges are the empirical $1/3$ and $2/3$ quantiles of $\tilde\rho_0$.
Inside a tercile the high-score half is the top half of $s(x)$.
Mean $\tilde\rho_0$ matches to the precision shown in Table~\ref{tab:within}; we do not rebalance further.

\section{Additional results}
\label{app:results}

\subsection{Per-benchmark accuracy}
\label{app:perbench}

Tables~\ref{tab:per-math} and~\ref{tab:per-logic} break the macro-average into benchmarks.
The ordering of methods on the macro-average is not an ordering that one benchmark reverses into a loss, except on ARC-AGI-2 at 1.7B, where several rows sit on top of each other near the floor.
AIME and the hard Knights-and-Knaves band move more than MATH-500 and the easy Knights-and-Knaves band.
That is the pattern the allocation is aimed at: prompts that are unanimous because they are failed, not prompts that are already solved.

\begin{table}[ht]
  \caption{Math, greedy accuracy by benchmark.}
  \label{tab:per-math}
  \centering
  \small
  \setlength{\tabcolsep}{3pt}
  \begin{tabular}{lccccccc}
\toprule
Method & MATH-500 & Minerva & AIME24 & AIME25 & Olympiad & AMC23 & Macro \\
\midrule
\multicolumn{8}{l}{\emph{Qwen3-1.7B}} \\
Uniform & 0.718 & 0.301 & 0.152 & 0.121 & 0.408 & 0.564 & 0.377 \\
Aperture & 0.725 & 0.321 & 0.177 & 0.147 & 0.425 & 0.576 & 0.395 \\
Dyn.\ sampling & 0.721 & 0.309 & 0.162 & 0.131 & 0.415 & 0.569 & 0.385 \\
Dyn.\ + Aperture & 0.728 & 0.328 & 0.186 & 0.156 & 0.430 & 0.580 & 0.401 \\
Selective & 0.720 & 0.307 & 0.159 & 0.128 & 0.413 & 0.567 & 0.382 \\
Sel.\ + Aperture & 0.726 & 0.325 & 0.182 & 0.152 & 0.428 & 0.578 & 0.398 \\
\midrule
\multicolumn{8}{l}{\emph{Qwen3-4B}} \\
Uniform & 0.804 & 0.412 & 0.246 & 0.204 & 0.528 & 0.694 & 0.481 \\
Aperture & 0.809 & 0.430 & 0.269 & 0.229 & 0.542 & 0.702 & 0.497 \\
Dyn.\ sampling & 0.806 & 0.419 & 0.255 & 0.214 & 0.533 & 0.697 & 0.487 \\
Dyn.\ + Aperture & 0.811 & 0.435 & 0.277 & 0.237 & 0.546 & 0.705 & 0.502 \\
Selective & 0.805 & 0.417 & 0.253 & 0.211 & 0.532 & 0.696 & 0.486 \\
Sel.\ + Aperture & 0.810 & 0.433 & 0.273 & 0.233 & 0.544 & 0.704 & 0.499 \\
\bottomrule
\end{tabular}

\end{table}

\begin{table}[ht]
  \caption{Logic, greedy accuracy by benchmark.
  ARC-AGI-2 is held out of training.}
  \label{tab:per-logic}
  \centering
  \small
  \setlength{\tabcolsep}{3.5pt}
  \begin{tabular}{lcccccc}
\toprule
Method & K\&K (3--4) & K\&K (5--7) & Ordering & Graph & ARC-AGI-2 & Macro \\
\midrule
\multicolumn{7}{l}{\emph{Qwen3-1.7B}} \\
Uniform & 0.414 & 0.088 & 0.172 & 0.128 & 0.053 & 0.171 \\
Aperture & 0.423 & 0.104 & 0.186 & 0.143 & 0.069 & 0.185 \\
Dyn.\ sampling & 0.418 & 0.095 & 0.178 & 0.134 & 0.060 & 0.177 \\
Dyn.\ + Aperture & 0.427 & 0.109 & 0.191 & 0.148 & 0.075 & 0.190 \\
Selective & 0.417 & 0.092 & 0.176 & 0.132 & 0.058 & 0.175 \\
Sel.\ + Aperture & 0.425 & 0.106 & 0.188 & 0.145 & 0.072 & 0.187 \\
\midrule
\multicolumn{7}{l}{\emph{Qwen3-4B}} \\
Uniform & 0.476 & 0.121 & 0.214 & 0.168 & 0.076 & 0.211 \\
Aperture & 0.483 & 0.135 & 0.226 & 0.181 & 0.090 & 0.223 \\
Dyn.\ sampling & 0.479 & 0.127 & 0.219 & 0.173 & 0.082 & 0.216 \\
Dyn.\ + Aperture & 0.486 & 0.139 & 0.230 & 0.185 & 0.096 & 0.227 \\
Selective & 0.478 & 0.125 & 0.217 & 0.172 & 0.080 & 0.214 \\
Sel.\ + Aperture & 0.485 & 0.137 & 0.228 & 0.183 & 0.092 & 0.225 \\
\bottomrule
\end{tabular}

\end{table}

\begin{figure}[ht]
  \centering
  \includegraphics[width=\textwidth]{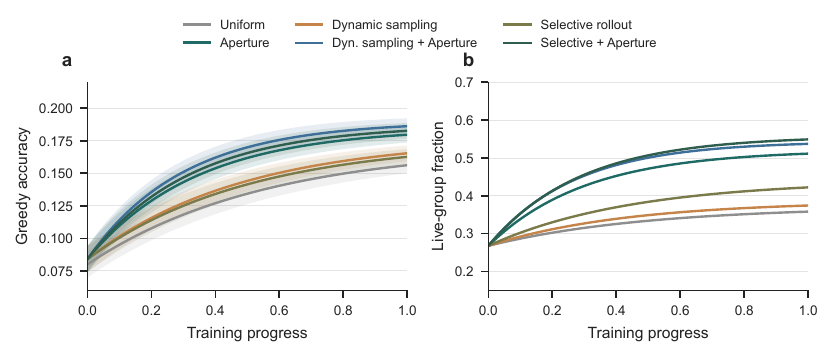}
  \caption{Qwen3-1.7B on logic.
  Axes match Figure~\ref{fig:results}.
  The live-group gap is similar to math.
  The accuracy level is lower because the held-out puzzles, and ARC-AGI-2 in particular, are harder for this model.}
  \label{fig:logic}
\end{figure}

\subsection{Seeds}
\label{app:seeds}

Table~\ref{tab:seeds} lists the three seeds behind the 1.7B macros.
The seed range on math is a few thousandths for every method, smaller than the gap between uniform GRPO and \method{}.
Logic is slightly noisier on the hard band and still separates the prior from uniform sampling in every seed.
We do not report a paired significance test.
With three seeds a test would be a statement about these three runs, and the tables already show the runs.

\begin{table}[ht]
  \caption{Per-seed macro-accuracy, Qwen3-1.7B.
  The mean of each row is the corresponding entry of Tables~\ref{tab:math} and~\ref{tab:logic}.}
  \label{tab:seeds}
  \centering
  \small
  \begin{tabular}{lcccccc}
\toprule
& \multicolumn{3}{c}{Math} & \multicolumn{3}{c}{Logic} \\
\cmidrule(lr){2-4}\cmidrule(lr){5-7}
Method & seed 0 & seed 1 & seed 2 & seed 0 & seed 1 & seed 2 \\
\midrule
Uniform & 0.383 & 0.380 & 0.369 & 0.172 & 0.168 & 0.173 \\
Aperture & 0.392 & 0.397 & 0.396 & 0.182 & 0.185 & 0.188 \\
Dyn.\ sampling & 0.385 & 0.384 & 0.384 & 0.175 & 0.181 & 0.175 \\
Dyn.\ + Aperture & 0.399 & 0.399 & 0.406 & 0.193 & 0.188 & 0.189 \\
Selective & 0.383 & 0.380 & 0.384 & 0.175 & 0.174 & 0.176 \\
Sel.\ + Aperture & 0.399 & 0.398 & 0.398 & 0.183 & 0.195 & 0.184 \\
\bottomrule
\end{tabular}

\end{table}

\subsection{Another model family}
\label{app:llama}

Table~\ref{tab:llama} repeats the math protocol with Llama-3.2-3B.
The absolute accuracy is lower than Qwen3-1.7B.
The ordering is the same: the prior alone beats uniform sampling and dynamic sampling, and the product of the prior with dynamic sampling is the best row.
We read this as evidence that the unanimity identity, rather than a quirk of the Qwen tokenizer or pretraining mix, is what the prior is using.
It is one model and one domain.
Appendix~\ref{app:discussion} says what a failure of transfer looks like on the domain axis, where we did measure it.

\begin{table}[ht]
  \caption{Llama-3.2-3B, math, same metrics as the Qwen runs.}
  \label{tab:llama}
  \centering
  \begin{tabular}{lccc}
\toprule
Method & Acc & AUC & Live \\
\midrule
Uniform & 0.316 & 0.288 & 42.1\% \\
Aperture & 0.331 & 0.308 & 56.6\% \\
Dyn.\ sampling & 0.322 & 0.296 & 43.4\% \\
Dyn.\ + Aperture & 0.336 & 0.314 & 58.1\% \\
Selective & 0.320 & 0.293 & 48.8\% \\
Sel.\ + Aperture & 0.333 & 0.311 & 59.4\% \\
\bottomrule
\end{tabular}

\end{table}

\subsection{Other online schedulers}
\label{app:compose}

Table~\ref{tab:compose} multiplies $w$ onto a moving-edge filter \citep{wu2026edge} on Qwen3-1.7B math.
The filter improves on uniform sampling by itself, and it improves further when the prior is attached.
The live rate moves by more than the accuracy, which is the same signature as in Table~\ref{tab:math}.
We did not sweep the filter's own threshold.
The product is positive on both accuracy and the live rate.

\begin{table}[ht]
  \caption{The prior as a multiplier on a moving-edge filter.
  Qwen3-1.7B, math.}
  \label{tab:compose}
  \centering
  \begin{tabular}{lcc}
\toprule
Method & Acc & Live \\
\midrule
Uniform & 0.377 & 45.8\% \\
Moving-edge filter & 0.385 & 51.2\% \\
Moving-edge + Aperture & 0.397 & 62.4\% \\
\bottomrule
\end{tabular}

\end{table}

\subsection{Sweeps}
\label{app:sweeps}

Probe size (Table~\ref{tab:psize}) has a knee at $|P|=\nprobe$.
A probe of $256$ prompts is only a small step above uniform sampling.
A probe of $2048$ prompts adds little over $512$ and costs four times the probe rollouts.
Probe length (Table~\ref{tab:pepoch}) has the same shape: two epochs are under-trained relative to the main recipe, and ten epochs are close to six.

The group size inside the score (Table~\ref{tab:gmiss}) is a direct check of Proposition~\ref{prop:reverse}.
Training always uses $G=8$.
Scoring with $G'=4$ or $G'=16$ is worse than scoring with $G'=8$, by about half a point of accuracy.
The damage is limited because many prompts are extreme enough that every reasonable $G$ ranks them together.
The prompts near the reversal, which the proposition isolates, are a slice of the pool rather than the whole pool.

The mixture (Table~\ref{tab:lambda}) is best at $\lambda=\mixLam$.
$\lambda=0$ is uniform sampling.
$\lambda=1$ gives the interior prompts no floor and loses accuracy even though its live rate in the main ablation is high: the run concentrates on prompts that are opening and under-samples prompts that are already useful.
The power (Table~\ref{tab:alpha}) is best at $\alpha=\mixAlpha$.
$\alpha=1$ is peaked, and $\alpha=0.25$ is close to uniform.

\begin{table}[ht]
  \caption{Probe size on Qwen3-1.7B.
  Cells are math accuracy and logic accuracy.}
  \label{tab:psize}
  \centering
  \begin{tabular}{lc}
\toprule
Probe size & Math / Logic acc \\
\midrule
$|P|=256$ & 0.384 / 0.175 \\
$|P|=512$ & 0.395 / 0.185 \\
$|P|=1024$ & 0.397 / 0.186 \\
$|P|=2048$ & 0.398 / 0.187 \\
\bottomrule
\end{tabular}

\end{table}

\begin{table}[ht]
  \caption{Probe length, Qwen3-1.7B math.}
  \label{tab:pepoch}
  \centering
  \begin{tabular}{lc}
\toprule
Probe length & Math acc \\
\midrule
2 epochs & 0.386 \\
4 epochs & 0.391 \\
6 epochs & 0.395 \\
10 epochs & 0.396 \\
\bottomrule
\end{tabular}

\end{table}

\begin{table}[ht]
  \caption{Group size $G'$ used inside \eqref{eq:score}, with training still at $G=8$.
  Qwen3-1.7B math.}
  \label{tab:gmiss}
  \centering
  \begin{tabular}{lc}
\toprule
Score group size & Math acc \\
\midrule
$G'=4$ & 0.390 \\
$G'=8$ & 0.395 \\
$G'=16$ & 0.391 \\
\bottomrule
\end{tabular}

\end{table}

\begin{table}[ht]
  \caption{Mixture weight $\lambda$, Qwen3-1.7B math.}
  \label{tab:lambda}
  \centering
  \begin{tabular}{lc}
\toprule
Mixture & Math acc \\
\midrule
$\lambda=0$ (uniform) & 0.377 \\
$\lambda=0.5$ & 0.386 \\
$\lambda=0.75$ & 0.395 \\
$\lambda=1.0$ & 0.389 \\
\bottomrule
\end{tabular}

\end{table}

\begin{table}[ht]
  \caption{Score power $\alpha$, Qwen3-1.7B math.}
  \label{tab:alpha}
  \centering
  \begin{tabular}{lc}
\toprule
Power & Math acc \\
\midrule
$\alpha=0.25$ & 0.387 \\
$\alpha=0.5$ & 0.395 \\
$\alpha=1.0$ & 0.390 \\
\bottomrule
\end{tabular}

\end{table}

\subsection{Reusing the prior}
\label{app:transfer}

Table~\ref{tab:transfer} asks how much of the prior is about the pool and how much is about the model.
A prior fit on Qwen3-1.7B and used to train Qwen3-4B on math reaches \xferCross{}, between the uniform 4B run at \MAccG{} and the in-regime 4B prior at \MAccA{}.
Most of the in-regime gain survives the change of scale inside one family.
A prior fit on math and used to train the 1.7B model on logic reaches \xferDom{}, a smaller step from the uniform logic run at \lAccG{} toward the in-regime logic prior at \lAccA{}.
The unanimity formula is the same in both domains.
The prompts that open are not.
We refit the probe when the domain changes, and we would refit it for a new reward or a new group size.

\begin{table}[ht]
  \caption{Reused priors.
  ``In-regime'' refits the probe on the model and pool being trained.}
  \label{tab:transfer}
  \centering
  \begin{tabular}{lc}
\toprule
Prior & Acc \\
\midrule
In-regime 4B math & 0.497 \\
1.7B prior, 4B math & 0.490 \\
Uniform 4B math & 0.481 \\
In-regime 1.7B logic & 0.185 \\
Math prior, 1.7B logic & 0.176 \\
Uniform 1.7B logic & 0.171 \\
\bottomrule
\end{tabular}

\end{table}

\subsection{How long the static score stays aligned}
\label{app:temporal}

The prior is computed once.
Table~\ref{tab:temporal} recomputes the score at later checkpoints, using the same formula with the current policy in place of $\thprobe$, and reports the Spearman correlation with the initial score on the math pool, Qwen3-1.7B.
The correlation is \spearEarly{} at $25\%$ of training and \spearEnd{} at the end.
The ranking decays, and it does not collapse.
High-score prompts remain high-score prompts more often than a fresh shuffle would allow.
This is why a static multiplier is still visible late in Figure~\ref{fig:results}, and why we do not claim the prior should be frozen for a run many times longer than the eight epochs we used.
A natural extension, which we do not run here, is to refresh the probe when the correlation falls below a threshold.

\begin{table}[ht]
  \caption{Spearman correlation between the initial \method{} score and a score recomputed at the current policy.
  Qwen3-1.7B, math.}
  \label{tab:temporal}
  \centering
  \begin{tabular}{lc}
\toprule
Training progress & Spearman $\rho$ \\
\midrule
25\% & 0.840 \\
50\% & 0.770 \\
75\% & 0.690 \\
100\% & 0.610 \\
\bottomrule
\end{tabular}

\end{table}

\subsection{Where the prior does not help}
\label{app:failure}

We built an easy slice by taking the math prompts whose base pass rate, estimated with $32$ rollouts, is at least $0.8$, and evaluating greedy accuracy on a matched held-out set of the same filter.
Table~\ref{tab:failure} shows the result.
Uniform GRPO scores \failAccG{}.
\method{} scores \failAccA{}.
Dynamic sampling is slightly higher than uniform, and the product with the prior gives the gain back.
On this slice the failure mode the score is designed for is scarce: groups are unanimous because the prompts are solved, and downweighting them removes practice on prompts the evaluation still contains.
The main pools are not this slice.
The result is the boundary of the claim, and it is the reason the mixture keeps a uniform component rather than discarding every low-score prompt.

\begin{table}[ht]
  \caption{Easy slice, Qwen3-1.7B.
  Prompts are already solved at high pass rate under the base model.}
  \label{tab:failure}
  \centering
  \begin{tabular}{lc}
\toprule
Method & Easy-slice acc \\
\midrule
Uniform & 0.742 \\
Aperture & 0.739 \\
Dyn.\ sampling & 0.744 \\
Dyn.\ + Aperture & 0.741 \\
\bottomrule
\end{tabular}

\end{table}

\section{Discussion}
\label{app:discussion}

\paragraph{What the score refuses.}
Proposition~\ref{prop:mono} sets $s=0$ when the probe pass rate falls.
A prompt that drops from $0.9$ to $0.5$ is becoming a split group, which is exactly the event the live-rate curves reward, and the score ignores it.
We made that choice because a falling pass rate on the probe is ambiguous.
It can mean the prompt is being forgotten, in which case more weight would accelerate the drop, or it can mean the probe was noisy.
The positive-part takes the conservative side.
A signed score that upweights movement toward $1/2$ from either direction is a different method, and it would need a probe stable enough to tell forgetting from noise.
We do not have that separation.

\paragraph{Binary rewards.}
Lemma~\ref{lem:dead} uses a Bernoulli reward.
A rubric with partial credit, a unit-test suite with several weighted tests, or a verifier that returns a real number in $[0,1]$ can have within-group variance zero for reasons other than unanimity of a single bit, and it can have small variance without being unanimous.
The expected-energy identity in Lemma~\ref{lem:energy} has an analogue for any bounded i.i.d.\ reward, with $p(1-p)$ replaced by the reward variance, but the closed form $p^G+(1-p)^G$ does not survive.
Extending \method{} to real-valued rewards means estimating that variance at the two endpoints, or estimating the probability that the empirical variance is zero, which for continuous rewards is an event of a different kind.
We stayed inside exact-match tasks so the formula we plot is the formula we optimize.

\paragraph{The probe can overfit its subset.}
The probe trains on $\nprobe$ prompts for $\epProbe$ epochs.
Those prompts are also in $D$, so they are seen more than the prompts outside $P$.
Their endpoint $\tilde\rho_1$ can be high because they were memorized rather than because the recipe is about to solve them.
Two checks limit the damage.
First, the score of a memorized prompt that races to the ceiling is killed by $\nu_G(\tilde\rho_1)$, which is the same mechanism that kills prompt C in the witness.
Second, the hold-out benchmarks are not in $P$.
A probe that only memorizes $P$ has to help MATH-500, AIME, and the held-out puzzles to move the numbers we report.
It does.
A probe the size of the full pool would remove this concern and would also remove the reason to probe rather than to train.
The size sweep is the practical compromise.

\paragraph{Staleness and refresh.}
Table~\ref{tab:temporal} is the quantitative version of a complaint any static prior deserves.
Eight epochs is long enough for the ranking to loosen and short enough that it still agrees with itself.
Runs that continue until the live rate collapses for a second time, because the remaining prompts have all sealed at $1$, are outside our evidence.
The easy-slice result is a preview of that regime.
Refreshing the two endpoints at the midpoint, and rebuilding $w$, is the obvious patch.
It doubles the evaluation cost and keeps the probe idea.
We leave it as a follow-up because the static version already has a clean comparison with online methods that refresh every step.

\paragraph{Relation to advantage shaping.}
Methods that rewrite the advantage on a unanimous group are solving the same waste with the opposite commitment \citep{le2025noprompt,liu2025drgrpo,wang2025entropy}.
They keep the prompt in the batch and manufacture a direction to update.
\method{} keeps the advantage honest and changes the batch.
The two are compatible: a shaped advantage can be applied to the groups that the prior still draws.
We do not combine them, because the experiment would no longer say whether the gain came from the prompt distribution.
The live rate would also become a worse diagnostic, since a shaped unanimous group is no longer a zero update.

\paragraph{Relation to zero-rollout difficulty.}
A difficulty prior that looks only at the base model, or only at the text of the prompt, is cheaper than a probe \citep{sha2026thinkprior,shi2025adarf,ye2025limo}.
It is also, by Lemma~\ref{lem:dead}, a monotone function of something other than the future drop in $\cfail_G$.
If that something is the current pass rate, Section~\ref{sec:matched} is the reason it is not enough: inside a pass-rate tercile the future still moves accuracy by several points.
If it is a feature of the prompt text, it can correlate with opening, and a probe is then an expensive way to read a cheap feature.
We do not have a negative result of that form.
We have a positive result for the probe against pass-rate controls, which are the strongest snapshot of the same verifier the training run will use.

\paragraph{Domains and scale.}
The logic pool is smaller, the gain is similar in absolute accuracy, and the cross-domain reuse is the weak cell of Table~\ref{tab:transfer}.
We would not ship a math prior onto a code verifier, or a $G=8$ prior onto a $G=64$ training run, without refitting.
Proposition~\ref{prop:reverse} is the $G$ half of that warning.
The domain half is empirical.
Scale inside Qwen3 is kinder than domain transfer: most of the 4B gain is available from the 1.7B probe.
That is a statement about these two checkpoints, both trained with the same recipe on the same pool.
It is not a statement about a 70B model we did not run.

\paragraph{What would falsify the claim.}
The claim is that a probe forecast of $\nu_G(\tilde\rho_1)[\cfail_G(\tilde\rho_0)-\cfail_G(\tilde\rho_1)]_+$ ranks prompts by the accuracy and live rate they produce under this GRPO recipe, beyond the ranking induced by $\tilde\rho_0$.
Table~\ref{tab:within} is the falsification test we actually ran.
It would have failed if the high-score half had matched or lost to the low-score half inside a tercile.
A second falsification would be a pool on which headroom-gain beats the full score on both accuracy and live rate.
Table~\ref{tab:ablation} does not show that.
A third would be a group size at which the identity in Lemma~\ref{lem:dead} is the wrong waste: continuous rewards, or a learned verifier whose ties are rare.
That case is outside the experiments, and the discussion above says why the formula would have to change.

\end{document}